\documentclass{article}

\usepackage[preprint,nonatbib]{neurips_2026}

\usepackage[utf8]{inputenc}
\usepackage[T1]{fontenc}
\usepackage{xcolor}
\usepackage{amsmath,amssymb,amsthm}
\usepackage{graphicx}
\usepackage{booktabs}
\usepackage{multirow}
\usepackage{float}
\usepackage{algorithm}
\usepackage{algorithmic}
\usepackage{enumitem}
\usepackage{hyperref}
\usepackage{url}

\newcommand{\R}{\mathbb{R}}
\newcommand{\E}{\mathbb{E}}
\newcommand{\doX}{\mathrm{do}}

\theoremstyle{plain}
\newtheorem{theorem}{Theorem}[section]
\newtheorem{proposition}[theorem]{Proposition}
\newtheorem{lemma}[theorem]{Lemma}
\newtheorem{corollary}[theorem]{Corollary}
\theoremstyle{definition}
\newtheorem{definition}[theorem]{Definition}
\newtheorem{assumption}[theorem]{Assumption}
\theoremstyle{remark}

\newtheorem{fact}[theorem]{Fact}

\title{Mathematical Reasoning via Intervention-Based
Time-Series Causal Discovery
Using LLMs as Concept Mastery Simulators}

\author{
  Tsuyoshi Okita \\ Kyushu Institute of Technology \\ tsuyoshi@ai.kyutech.ac.jp
}

\date{}

\begin{document}
\maketitle

\begin{abstract}
Recent methods for improving LLM mathematical reasoning, whether through MCTS-based test-time search or causal graph-guided knowledge injection, cannot identify which concepts \textit{causally} contribute to a correct answer, as the observed association may be spurious, driven by confounders such as problem difficulty.

We propose CIKA (Causal Intervention for Knowledge Activation), a framework that uses the LLM itself as an interventional simulator: a prompt sets the concept state to ``mastered'' and the correctness change estimates the causal effect. We formalize this quantity as an \textit{Interventional Capability Probe} (ICP), which diagnoses whether the LLM \textit{can use} a given concept---distinct from merely possessing knowledge. Because the intervention exogenously sets the concept state independently of problem difficulty, ICP separates confounding that observational methods cannot.

On 67 screened problems, the ICP of the top-ranked concept (+0.219) is significantly larger than that of the negative control (+0.039; paired $t$-test, $p < 10^{-6}$, Cohen's $d = 0.86$\footnote{Cohen's $d$ measures effect size as the standardized mean difference; $d > 0.8$ is conventionally considered a ``large'' effect.}), confirming that the probe discriminates causally relevant concepts from irrelevant ones. Analysis of 601 Omni-MATH problems further shows that solved problems have 6.1$\times$ higher ATE than unsolved ones (0.338 vs.\ 0.055), confirming that ICP is predictive of problem-solving success. With a 7B-parameter LLM whose weights are entirely frozen, CIKA achieves 69.7\% on the contamination-free Omni-MATH-Rule benchmark and 64.0\% overall, compared to 60.5\% for o1-mini, and 97.2\% on GSM8K, 46--50\% on AIME 2024--2026, and 46.2\% on MathArena. The Causal Knowledge Activation component contributes 33.8\% of correct answers on problems where the base model alone fails, demonstrating that the LLM already possessed but had not activated the requisite knowledge.
\end{abstract}

\section{Introduction}
\label{sec:intro}

Research on improving LLM mathematical reasoning falls into two trends. The first, MCTS-based test-time search~\cite{guan2025rstar,huang2025cmcts,jiang2026pacore}, dramatically improves 7B-model performance but lacks causal justification for \textit{why} particular knowledge contributes to correctness. The second, causal graph-guided knowledge injection such as CAMA~\cite{cama2025}, constructs a Mathematical Causal Graph to guide LLM reasoning, but relies on a static, pre-constructed graph that does not adapt to individual problems and does not perform do-interventions.

Both trends share a fundamental limitation: they cannot distinguish whether a concept \textit{truly} contributes causally to the correct answer, or whether the observed association is spurious, driven by confounders such as problem difficulty $D$. Removing this confounding requires \textbf{intervention} in the sense of Pearl's~\cite{pearl2009causality} do-calculus\footnote{Do-calculus is a formal system for reasoning about interventions ($\doX(X=x)$, read ``set $X$ to $x$''). The do-operator severs all causal influences on $X$, isolating its effect on downstream variables---as opposed to conditioning ($X=x$), which preserves all causal pathways. A key result is the \textit{backdoor criterion}: the causal effect of $X$ on $Y$ is identifiable if all non-causal paths through common causes are blocked, which do-interventions achieve by severing all incoming edges to $X$.} (Proposition~\ref{prop:confound_bias}).

The core idea of this work is to replace physical simulators used in prior structural VAR-based causal discovery with \textbf{the LLM itself}.\footnote{A structural vector autoregressive (SVAR) model decomposes multivariate time series into contemporaneous causal effects ($B_0$) and lagged dynamics ($B_\ell$), enabling causal identification when combined with intervention data. See Definition~\ref{def:concept_svar}.} The intervention $\doX(\text{mastery}(c_i)=1)$ is realized through a prompt instructing the model to solve a problem assuming full mastery of concept $c_i$, and the causal effect $\hat{e}_{c_i \to p}$ is estimated from the post-intervention correctness rate. CIKA integrates this LLM simulator with test-time causal graph construction via structural VAR and MCTS search guided by causal effect sizes within the causal bandit framework~\cite{lattimore2016causal}.

The contributions of this work are threefold. (1)~\textbf{LLM-as-Interventional-Simulator}: We propose a mechanism that uses the LLM itself as a do-operator simulator and formalizes its validity conditions (Assumption~\ref{ass:llm_simulator}, Theorem~\ref{thm:identifiability}). ICP discrimination tests confirm that the probe distinguishes causally relevant concepts from irrelevant ones ($p < 10^{-6}$, Cohen's $d = 0.86$; \S\ref{subsec:delta_validation}). (2)~\textbf{Interventional Capability Probe (ICP)}: The causal effect $\hat{e}_{c_i \to p}$ functions as a probe that diagnoses whether the LLM \textit{can use} a concept, enabling Causal Knowledge Activation (CKA) on SRV-failed problems. This ``knowing vs.\ being able to use'' distinction is, to our knowledge, the first operationalization of Pearl's conditioning--intervention dichotomy for LLM knowledge diagnosis. (3)~\textbf{Latent confounder separation}: Confounding due to problem difficulty $D$ is separated via the backdoor criterion (Theorem~\ref{thm:confound_separation}), and the identifiability theorem decomposes the estimation error into finite-sample and LLM-simulator components (Theorem~\ref{thm:identifiability}). With a 7B-parameter LLM (parameters frozen), CIKA achieves 69.7\% on Omni-MATH-Rule and 64.0\% overall, compared to 60.5\% for o1-mini.

\section{Related Work}
\label{sec:related}

This section reviews four research areas related to CIKA. Table~\ref{tab:positioning} provides a comparison with major methods.

\paragraph{MCTS-based LLM mathematical reasoning.}
MCTS-based reasoning has become a core paradigm of Test-Time Compute Scaling. Representative methods include rStar-Math~\cite{guan2025rstar} (MCTS + Process Reward Model\footnote{A Process Reward Model (PRM) assigns a reward score to each intermediate reasoning step, as opposed to an Outcome Reward Model that scores only the final answer.}), C-MCTS~\cite{huang2025cmcts}, PaCoRe~\cite{jiang2026pacore}, LLaMA-Berry~\cite{zhang2024llamaberry}, and R$^2$-LLMs~\cite{chen2025r2llms}. All use standard UCT/UCB1~\cite{auer2002finite,silver2016mastering} as their search strategy. \textbf{Our Math-Causal-UCB is the first to inject causally estimated effect sizes $\hat{e}_{a \to p}$ into UCB, bridging causal bandits and LLM reasoning.}

\paragraph{Causal bandits and causal reinforcement learning.}
Research on incorporating causal structure into bandit algorithms began with Lattimore et al.'s C-UCB~\cite{lattimore2016causal} and has since matured~\cite{huang2025causalbanditsurvey,huang2022ducb,chen2025baucb,lu2021causal}. Our Math-Causal-UCB applies C-UCB to LLM reasoning, with the key innovation of estimating $\hat{e}$ directly from the LLM simulator (Assumption~\ref{ass:llm_simulator}).

\paragraph{Causal inference and mathematical reasoning.}
The most closely related work is CAMA~\cite{cama2025}, which constructs a Mathematical Causal Graph (MCG) to guide LLM reasoning. CIKA differs in three ways: (i)~CAMA constructs the MCG \textit{in advance} via statistical causal discovery, whereas CIKA constructs a \textit{problem-specific} causal graph via do-interventions at test time; (ii)~CAMA injects the graph as a prompt but does not perform do-interventions, whereas CIKA generates intervention data; (iii)~CAMA does not address confounders such as problem difficulty, whereas CIKA separates them via the backdoor criterion (Theorem~\ref{thm:confound_separation}). Recent surveys~\cite{liu2025causalllmsurvey,kasetty2024interventional} show that LLMs lack genuine causal reasoning ability; our Assumption~\ref{ass:llm_simulator} explicitly quantifies this limitation as $\delta_{\mathcal{M}}$.

\paragraph{Test-time adaptation.}
TTT~\cite{sun2020ttt,kang2024testtime} adapts model weights at test time; Self-Consistency~\cite{wang2023selfconsistency} and Best-of-N~\cite{lightman2023lets} use multiple reasoning paths; AdaReasoner~\cite{wang2025adareasoner} optimizes test-time parameters via MAB. CKA requires neither gradient computation nor majority voting, improving performance solely through causal interventions.

\begin{table}[t]
\centering
\small
\caption{
  Positioning comparison with related methods.
  \textbf{LLM Simulator}: uses the LLM itself as a do-operator simulator;
  \textbf{Problem-Specific}: constructs a problem-specific causal structure at test time;
  \textbf{Confound Control}: explicitly separates latent confounders.
}
\label{tab:positioning}
\begin{tabular}{lcccccc}
\toprule
\textbf{Method} & \textbf{MCTS} & \textbf{Causal Graph} & \textbf{LLM Sim.} & \textbf{Prob.-Specific} & \textbf{Confound} & \textbf{FT-Free} \\
\midrule
rStar-Math~\cite{guan2025rstar}
  & $\bigcirc$ & $\times$ & $\times$ & --- & $\times$ & $\times$ \\
C-MCTS~\cite{huang2025cmcts}
  & $\bigcirc$ & $\times$ & $\times$ & --- & $\times$ & $\bigcirc$ \\
CAMA~\cite{cama2025}
  & $\times$ & $\bigcirc$ & $\times$ & $\times$ (static) & $\times$ & $\bigcirc$ \\
C-UCB~\cite{lattimore2016causal}
  & $\times$ & $\bigcirc$ & $\times$ & --- & $\bigcirc$ & --- \\
PaCoRe~\cite{jiang2026pacore}
  & $\times$ & $\times$ & $\times$ & --- & $\times$ & $\times$ \\
\textbf{CIKA}
  & $\bigcirc$ & $\bigcirc$ & $\bigcirc$ & $\bigcirc$ (dynamic) & $\bigcirc$ & $\bigcirc$ \\
\bottomrule
\end{tabular}
\end{table}

\section{Problem Setting and Notation}
\label{sec:problem}

Let problem $P$ and correct answer $y^*$ be given. An LLM $\mathcal{M}$ (parameters frozen) generates $\hat{y} = \mathcal{M}(P, \mathcal{C})$; correctness is $\text{verify}(\hat{y}, y^*) \in \{0,1\}$. Let $\mathcal{C}_{\text{all}} = \{c_1, \ldots, c_n\}$ be the concept set. The concept state time series at time $t$ is $\mathbf{x}_t = (m_{c_1,t}, \ldots, m_{c_n,t}, p_t)^\top \in \R^{n+1}$, where $m_{c_j,t} \in [0,1]$ is the mastery level and $p_t$ is the correctness probability.

\begin{definition}[SRV and CKA]
\label{def:srv_cka}
\textbf{SRV (Step Reliability Vote)}: the LLM directly answers $P$ without intervention; $\text{SRV}(P, \mathcal{M}) = \mathcal{M}(P)$. If correct, the framework terminates at Phase~0.

\textbf{CKA (Causal Knowledge Activation)}: after SRV failure, CKA activates knowledge the LLM \textit{already possesses but has not activated} via do-calculus-based prompt intervention:
\begin{equation}
\mathcal{K}^* = \{c_i \in \mathcal{K}_0 : \hat{e}_{c_i \to p} > z_{\alpha/2} \cdot \hat{\sigma}_{c_i}\}, \quad
\hat{y}_{\text{CKA}} = \mathcal{S}_{\text{LLM}}.\doX(\mathcal{K}^*\!=\!1)(P)
\end{equation}
ICP (\S\ref{subsec:icp}) diagnoses which concepts can be used; CKA activates them.
\end{definition}

\begin{definition}[Concept State SVAR~\cite{sims1980macroeconomics}]
\label{def:concept_svar}
\begin{equation}
B_0 \mathbf{x}_t = \sum_{\ell=1}^{L} B_\ell \mathbf{x}_{t-\ell} + \boldsymbol{\epsilon}_t
\label{eq:svar}
\end{equation}
where $B_0$ is the contemporaneous causal structure matrix~\cite{kilian2017structural}, $B_\ell$ captures lag-$\ell$ effects, and $\boldsymbol{\epsilon}_t$ are structural shocks. The entry $(B_0)_{p,c_i} \neq 0$ means concept $c_i$ directly affects correctness $p$, which is the key causal parameter we estimate.
\end{definition}

The state vector $\mathbf{x}_t$ is defined over \textit{observed} variables only. Latent confounders (specifically, problem difficulty $D$; Definition~\ref{def:confound}) are not included in $\mathbf{x}_t$ but are present in the data-generating process. The Causal Markov Condition is therefore assumed with respect to the \textit{full} causal DAG including $D$, not the marginal DAG over observed variables. The identifiability results in \S\ref{subsec:confound} show that do-interventions render $D$ separable, allowing consistent estimation of $B_0$ from intervention data even though $D$ is unobserved. While ICP (Eq.~\eqref{eq:icp}) can be defined without the SVAR framework, the SVAR structure provides two benefits: (i)~it captures \textit{inter-concept} dependencies ($B_0$ off-diagonal entries), enabling identification of prerequisite chains (e.g., mastering combinatorics before inclusion-exclusion); and (ii)~the lag terms $B_\ell$ model the temporal dynamics of concept mastery consolidation across MCTS iterations.

\section{Proposed Method: CIKA}
\label{sec:method}

This section presents the three theoretical contributions of CIKA (\S\ref{subsec:llm_simulator}--\S\ref{subsec:confound}) and the overall algorithm (\S\ref{subsec:algorithm}).

\subsection{Causal Semantics of Prompt Intervention}
\label{subsec:llm_simulator}

In prior structural VAR-based causal discovery, a physical simulator generates intervention data. In CIKA, \textbf{the LLM itself serves as the simulator}.

\begin{definition}[LLM Concept Mastery Simulator]
\label{def:llm_sim}
The intervention $\doX(\text{mastery}(c_i)=1)$ is executed via:
\begin{equation}
\mathcal{S}_{\text{LLM}}.\doX(c_i=1) \equiv
\mathcal{M}(P,\; \text{``Assume mastery of }c_i\text{.''})
\label{eq:llm_intervention}
\end{equation}
Post-intervention answers $\hat{y}^{(m)}$ and $\text{verify}(\hat{y}^{(m)}, y^*)$ constitute $\mathcal{D}_{\text{int}}(c_i)$.
\end{definition}

\begin{assumption}[LLM Simulator Validity]
\label{ass:llm_simulator}
$\|P_{\mathcal{M}}(\hat{y} \mid \doX(c_i=1)) -
P^*(\hat{y} \mid \doX(c_i=1))\|_{TV} \leq \delta_{\mathcal{M}}$\footnote{The total variation distance $\|P-Q\|_{TV} = \frac{1}{2}\sum_x |P(x)-Q(x)|$ measures the maximum difference between two distributions.},
where $\delta_{\mathcal{M}}$ is the approximation error from the LLM's incomplete knowledge.
\end{assumption}

A natural criticism is that prompting is conditioning, not intervention. We argue otherwise: the prompt ``Assume mastery of $c_i$'' specifies the concept state exogenously, severing the $D \to c_i$ pathway (analogous to treatment assignment in an RCT), and acts as an exogenous prefix in autoregressive generation (a computational analog of Pearl's graph surgery~\cite{pearl2009causality}). A further concern is that the concept name itself may function as a hint (chain-of-thought prompting) rather than a causal intervention. We address this through two observations: (i)~the negative control experiment shows that \textit{irrelevant} concept names produce near-zero ICP (+0.039), whereas if concept names functioned as generic hints, all names---including irrelevant ones---would boost performance; (ii)~only 42\% of extracted concepts yield ICP $> 0$, demonstrating that the effect is selective to causally relevant concepts rather than a generic naming effect. We do not claim exact equivalence with a do-operation; Assumption~\ref{ass:llm_simulator} quantifies the discrepancy as $\delta_{\mathcal{M}}$, validated experimentally in \S\ref{subsec:delta_validation}. Details are in Appendix~\ref{app:assumption_detail}.

\subsection{Interventional Capability Probe}
\label{subsec:icp}

The causal effect $\hat{e}_{c_i \to p}$ functions as a \textbf{probe that causally explores the LLM's knowledge state}.

\begin{definition}[Interventional Capability Probe]
\label{def:icp}
\begin{equation}
\text{ICP}(c_i, P, \mathcal{M}) \equiv
\hat{e}_{c_i \to p} =
\frac{1}{M}\sum_{m=1}^{M} \text{verify}(\hat{y}^{(m)}, y^*) - \bar{p}^{\text{obs}}
\label{eq:icp}
\end{equation}
where $\hat{y}^{(m)} = \mathcal{S}_{\text{LLM}}.\doX(c_i=1)$ and $\bar{p}^{\text{obs}}$ is the SRV correctness rate.
\end{definition}

Note that ICP compares two intervention levels: $\doX(c_i=1)$ (concept fully activated) versus the natural baseline (no intervention on $c_i$). This two-level design is sufficient to estimate the average treatment effect of $c_i$ on $p$, without requiring interventions over the full support of $c_i$~\cite{pearl2009causality}.

ICP distinguishes three states: (i)~$\text{ICP} \gg 0$: knowledge is present but not activated---the LLM \textit{can use} $c_i$; (ii)~$\text{ICP} \approx 0$: knowledge is absent or $c_i$ is irrelevant; (iii)~$\text{ICP} < 0$: concept misapplication (the sign carries diagnostic information).

We emphasize that ICP estimates the \textit{total} causal effect of $c_i$ on $p$, encompassing both direct effects ($c_i \to p$) and indirect effects mediated by other concepts ($c_i \to c_j \to p$). Distinguishing direct from indirect effects is not required for knowledge activation: if activating $c_i$ increases $p$, whether directly or indirectly, CKA should activate $c_i$. This marks a key difference from causal \textit{discovery} methods (e.g., GIES, IGSP) that aim to recover the full causal graph: CIKA's goal is not structure recovery but \textit{actionable diagnosis}---identifying which interventions improve the outcome.

Unlike linear probes~\cite{li2024inferencetime} that detect knowledge \textit{existence} observationally, ICP detects \textit{causal effectiveness} interventionally---the distinction between ``knowing'' and ``being able to use'' corresponds to Pearl's conditioning vs.\ intervention. ICP is unique to the LLM-as-simulator setting, where the intervention target and the simulator are the same system.

\subsection{Separation of Latent Confounders}
\label{subsec:confound}

\begin{fact}[Non-identifiability of SVAR~\cite{kilian2017structural}]
\label{fact:nonident}
There exist infinitely many pairs $(B_0, \Sigma_\epsilon)$ satisfying $\Sigma_u = B_0^{-1}\Sigma_\epsilon(B_0^{-1})^\top$.
\end{fact}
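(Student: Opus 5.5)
The plan is to prove Fact~\ref{fact:nonident} constructively. Start from any admissible pair $(B_0,\Sigma_\epsilon)$ and write down an explicit family of alternative pairs, indexed by a continuous parameter, that reproduce the same reduced-form covariance $\Sigma_u$.

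First set up the reduced form. Assuming $B_0$ is invertible, premultiplying Eq.~\eqref{eq:svar} by $B_0^{-1}$ gives the reduced-form VAR
\[
\mathbf{x}_t=\sum_{\ell=1}^L A_\ell \mathbf{x}_{t-\ell}+\mathbf{u}_t,
\]
with $A_\ell=B_0^{-1}B_\ell$ and $\mathbf{u}_t=B_0^{-1}\boldsymbol{\epsilon}_t$. Hence $\Sigma_u=B_0^{-1}\Sigma_\epsilon(B_0^{-1})^\top$. Observational data pin down only the $A_\ell$ and $\Sigma_u$; under Gaussian shocks they pin down nothing more, since the likelihood depends on the structural parameters only through these.

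Second, construct the alternative pairs. Fix any invertible matrix $Q\in\R^{(n+1)\times(n+1)}$ such that $Q\Sigma_\epsilon Q^\top$ is positive definite; every invertible $Q$ qualifies, because $\Sigma_\epsilon\succ 0$. Define
\[
\tilde B_0=QB_0,\quad \tilde B_\ell=QB_\ell,\quad \tilde\Sigma_\epsilon=Q\Sigma_\epsilon Q^\top.
\]
Then
\[
\tilde B_0^{-1}\tilde\Sigma_\epsilon(\tilde B_0^{-1})^\top=B_0^{-1}Q^{-1}Q\Sigma_\epsilon Q^\top Q^{-\top}B_0^{-\top}=\Sigma_u,
\]
and $\tilde B_0^{-1}\tilde B_\ell=A_\ell$. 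So the transformed system is observationally equivalent to the original. If one also insists on the usual normalization that the structural shocks are uncorrelated with unit variance, $\Sigma_\epsilon=I$, restrict to orthogonal $Q$. Then $\tilde\Sigma_\epsilon=QQ^\top=I$ holds automatically.

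Third, show the family really is infinite. Take rotations $Q_\theta$ in any two-dimensional coordinate plane, $\theta\in[0,2\pi)$. Since $n+1\ge 2$, this is available. The map $\theta\mapsto Q_\theta B_0$ must be injective for $\theta$ in a small interval, because $Q_\theta B_0=Q_{\theta'}B_0$ with $B_0$ invertible forces $Q_\theta=Q_{\theta'}$. This yields uncountably many distinct pairs $(\tilde B_0,\tilde\Sigma_\epsilon)$.

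I expect the main obstacle to be interpretive rather than computational. A continuum of pairs $(B_0,\Sigma_\epsilon)$ always exists, but some of them may differ only by row scaling or reordering. Those are usually regarded as the same structural model once the diagonal of $B_0$ is normalized to one. To obtain genuinely distinct causal structures, I would therefore choose a rotation angle $\theta$ that is not a multiple of $\pi/2$. Such a $Q_\theta$ mixes two equations, so it changes the off-diagonal zero pattern of $B_0$, in particular the entry $(B_0)_{p,c_i}$. Restoring the diagonal normalization afterwards by rescaling rows preserves this distinction. Then different $\theta$ give different values of the causal parameter of interest, which is precisely the non-identifiability the paper invokes to motivate interventions.
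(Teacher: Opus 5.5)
Your argument is correct, and it is the standard one. The paper gives no proof of this Fact beyond the citation to Kilian--L\"utkepohl. The non-identification result there is exactly your observation: with $\Sigma_\epsilon = I$, $B_0^{-1}$ is determined only up to right-multiplication by an orthogonal matrix, with the $B_\ell$ transformed alongside so that $(A_\ell,\Sigma_u)$ is unchanged. The Fact as literally stated needs even less. For every invertible $B_0$, the choice $\Sigma_\epsilon := B_0\Sigma_u B_0^\top$ is positive definite and satisfies the identity, which already gives a continuum of pairs. Your family proves the stronger version that matters for the paper's narrative, in which the shock normalization and the lag dynamics are preserved as well.

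Your final paragraph goes beyond the Fact, and as stated it is not right.
\begin{itemize}
\item A rotation in the $(j,k)$ coordinate plane changes only rows $j$ and $k$ of $B_0$. So $(B_0)_{p,c_i}$ moves only if $p\in\{j,k\}$, not for ``any two-dimensional coordinate plane.''
\item Take $j=p$, let the new row $p$ be $\cos\theta\,(B_0)_{p,\cdot}-\sin\theta\,(B_0)_{k,\cdot}$, and rescale it to unit diagonal. Writing $b_{rs}=(B_0)_{rs}$, the entry becomes $(b_{p,c_i}\cos\theta - b_{k,c_i}\sin\theta)/(b_{p,p}\cos\theta - b_{k,p}\sin\theta)$.
\item This expression does not depend on $\theta$ at all when $b_{p,c_i}b_{k,p} = b_{k,c_i}b_{p,p}$. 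It is undefined wherever the new diagonal entry vanishes.
\end{itemize}
To make the claim true, choose $k$ so that this $2\times 2$ minor of $B_0$ is nonzero. This is possible because columns $c_i$ and $p$ of the invertible $B_0$ are linearly independent and $b_{p,p}\neq 0$. Then discard the finitely many angles, modulo $\pi$, where the new diagonal entry vanishes. The entry is then a nonconstant linear-fractional function of $\tan\theta$, hence injective, so distinct angles give distinct values of the causal coefficient. Also, rotations do not in general ``change the zero pattern''; for a dense $B_0$ there is no pattern to change, and what changes is the value.
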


\begin{definition}[Latent Difficulty Confounding]
\label{def:confound}
The latent difficulty $D \in [0,1]$ is a common cause: $D \to m_{c_j,t}$ and $D \to p_t$.
\end{definition}

\begin{proposition}[Confounding Bias]
\label{prop:confound_bias}
When $D$ is present, observational estimates of causal effects are biased. (Proof: Appendix~\ref{app:proofs}.)
\end{proposition}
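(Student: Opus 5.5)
The plan is to make the statement precise with a simple structural model and then exhibit the bias term explicitly. I will fix one concept $c_i$ and drop the time index. Following Definition~\ref{def:confound}, I take the structural equations
\begin{equation}
m_{c_i} = \alpha D + u_c, \qquad p = \beta\, m_{c_i} + \gamma D + u_p,
\end{equation}
with $u_c, u_p$ mutually independent and independent of $D$, and $\operatorname{Var}(D) = \sigma_D^2 > 0$. This is the linear contemporaneous block of Definition~\ref{def:concept_svar}, with $\beta = (B_0)_{p,c_i}$, augmented by the latent $D$. The causal effect is $\beta = \partial\,\E[p \mid \doX(m_{c_i}=m)]/\partial m$. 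Under $\doX$ the equation for $m_{c_i}$ is deleted, so $\E[p \mid \doX(m_{c_i}=m)] = \beta m + \gamma\,\E[D]$, and $D$ contributes only a constant.

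Next I will compute the observational estimand, i.e.\ the population regression slope of $p$ on $m_{c_i}$:
\begin{equation}
\beta_{\text{obs}} = \frac{\operatorname{Cov}(p, m_{c_i})}{\operatorname{Var}(m_{c_i})} = \beta + \frac{\gamma\alpha\sigma_D^2}{\alpha^2\sigma_D^2 + \sigma_c^2}.
\end{equation}
The second term is the confounding bias. It is nonzero whenever $\alpha \neq 0$ and $\gamma \neq 0$, which is exactly the situation in which $D$ is a common cause in the sense of Definition~\ref{def:confound}. Hence $\beta_{\text{obs}} \neq \beta$, and any consistent observational estimator (OLS, or a VAR fit on $\mathbf{x}_t$ without $D$) converges to the wrong value.

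For the binary-contrast version that matches ICP (Definition~\ref{def:icp}), I will note the same decomposition:
\begin{equation}
\E[p \mid m=1] - \E[p \mid m=0] = \beta + \gamma\bigl(\E[D \mid m=1] - \E[D \mid m=0]\bigr),
\end{equation}
and the bracket is nonzero because $m$ and $D$ are dependent. This is the usual backdoor-path argument: the path $m_{c_i} \leftarrow D \to p$ is open, and $D$ is unobserved, so it cannot be adjusted for. Fact~\ref{fact:nonident} additionally shows that one cannot recover $B_0$ from $\Sigma_u$ alone to remove the bias.

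The main obstacle is that the statement is informal: ``biased'' must be read as ``not equal for some (indeed generic) admissible models''. It cannot hold for every distribution, since $\alpha\gamma = 0$, or exact cancellation in a nonlinear model, gives zero bias. I will therefore state and prove the result existentially and generically, giving the explicit linear counterexample above with concrete values (e.g.\ $\alpha = \gamma = 1$, $\beta = 0$). There the observational slope is strictly positive while the true effect is zero, which is exactly the spurious-difficulty scenario described in the introduction.
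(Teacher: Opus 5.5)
Your proof is correct and follows the same argument as the paper's: the paper writes the observational regression slope as $e^*_{c_j\to p}$ plus the omitted-variable bias $\frac{\operatorname{Cov}(m_{c_j},D)}{\operatorname{Var}(m_{c_j})}\cdot\frac{\partial \E[p]}{\partial D}$, which is nonzero ``in general''; your linear SEM makes this explicit as $\gamma\alpha\sigma_D^2/(\alpha^2\sigma_D^2+\sigma_c^2)$, and your existential/generic reading matches that caveat. The one loose step is in the supplementary binary-contrast remark, since dependence between $m$ and $D$ does not by itself force $\E[D\mid m=1]\neq\E[D\mid m=0]$; this does not matter, because the regression-slope counterexample already proves the claim.
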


\begin{lemma}[Identification of Concept Causal Effects]
\label{lem:concept_ident}
Under Assumption~\ref{ass:llm_simulator}, $e_{c_i \to p} = \E[p \mid \doX(c_i=1)] - \E[p \mid \doX(c_i=0)]$ is identifiable, and ICP has convergence rate $O(M^{-1/2})$. (Proof: Appendix~\ref{app:proofs}.)
\end{lemma}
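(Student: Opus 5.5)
The plan has two parts: identification, then the $O(M^{-1/2})$ rate.

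\textbf{Identification.} I would model the true data-generating process as the full DAG containing the latent difficulty $D$ from Definition~\ref{def:confound}, with edges $D \to m_{c_i}$, $D \to p$ and $m_{c_i} \to p$.

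The first step uses the graph surgery described after Definition~\ref{def:llm_sim}. Under $\doX(c_i=1)$ the incoming edge $D \to m_{c_i}$ is deleted, so in the mutilated graph $m_{c_i} \perp D$. By the truncated factorization,
\[
\E[p \mid \doX(c_i=x)] = \sum_d P(d)\,\E[p \mid c_i=x, d],
\]
which depends only on the interventional distribution. That distribution is exactly what the simulator $\mathcal{S}_{\text{LLM}}$ produces, up to the error allowed by Assumption~\ref{ass:llm_simulator}. Unlike the observational contrast in Proposition~\ref{prop:confound_bias}, no term of the form $\E[p\mid c_i=x]$ with $D$ unadjusted appears.

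The second step is to read the ICP baseline $\bar p^{\text{obs}}$ as the arm $\doX(c_i=0)$, or more precisely as the natural no-intervention arm. Either way both arms are generated by exogenous prompting, so their difference identifies $e_{c_i\to p}$. If the baseline is really the natural regime, I would state that the identified estimand is the effect relative to the natural regime. It coincides with $e_{c_i\to p}$ when, in SRV failures, the concept is effectively inactive. I would add this as a mild interpretive assumption rather than hide it.

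\textbf{Bias from the simulator.} Because $\text{verify}\in\{0,1\}$ is bounded, the TV bound in Assumption~\ref{ass:llm_simulator} implies
\[
\bigl|\E_{\mathcal{M}}[\text{verify}\mid \doX(c_i=1)] - \E^*[\text{verify}\mid \doX(c_i=1)]\bigr| \le \delta_{\mathcal{M}}.
\]
The same bound holds for the baseline arm. So the simulator-level estimand $e^{\mathcal{M}}$ satisfies $|e^{\mathcal{M}} - e| \le 2\delta_{\mathcal{M}}$. Identification is therefore exact when $\delta_{\mathcal{M}}=0$ and holds up to $2\delta_{\mathcal{M}}$ otherwise. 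This anticipates the error decomposition in Theorem~\ref{thm:identifiability}.

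\textbf{Rate.} Conditional on $P$, the $M$ post-intervention samples $\text{verify}(\hat y^{(m)},y^*)$ are i.i.d.\ Bernoulli with mean $\mu=\E_{\mathcal{M}}[\text{verify}\mid\doX(c_i=1)]$, assuming independent sampling at nonzero temperature. Hoeffding's inequality then gives
\[
\Pr\bigl(|\hat\mu-\mu|>\epsilon\bigr)\le 2e^{-2M\epsilon^2},
\]
or one can use the variance bound $\mathrm{Var}(\hat\mu)\le 1/(4M)$. Hence $|\hat\mu-\mu| = O_p(M^{-1/2})$. If $\bar p^{\text{obs}}$ is estimated from $M'$ SRV samples, the same bound applies, and with $M'\asymp M$ the ICP error is $O_p(M^{-1/2}) + 2\delta_{\mathcal{M}}$.

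\textbf{Main obstacle.} The hard part is not the concentration but justifying that the prompt truly implements graph surgery, i.e.\ that prompted samples are independent of $D$ given the prompt. Here I would lean explicitly on Assumption~\ref{ass:llm_simulator}: any residual dependence on $D$ is part of the discrepancy from $P^*$ and is therefore absorbed into $\delta_{\mathcal{M}}$. The lemma should accordingly be read as identification up to $\delta_{\mathcal{M}}$.
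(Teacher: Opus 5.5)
Your proposal is correct, with the extra hypotheses you yourself flag, but it takes a different route from the paper.

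\textbf{The paper's route.} Identification is a single do-calculus observation. If $c_i$ does not cause $p$, then $P(p\mid\doX(c_i=1))=P(p)$, so a nonzero ICP signals the edge $c_i\to p$. The rate is then Hoeffding, written directly as $P(|\hat e_{c_i\to p}-e^*_{c_i\to p}|>\epsilon)\le 2\exp(-2M\epsilon^2)$. The paper calls this Rule~2, but the fact used is really Rule~3 (deletion of actions). It also needs the absence of any directed path from $c_i$ to $p$, not merely of the edge. This is a structural, edge-detection statement: it justifies the significance test in Phase~2. It implicitly contrasts $\doX(c_i=1)$ with the natural regime $P(p)$, i.e.\ with $\bar p^{\text{obs}}$, and never reconciles this with the $\doX(c_i=0)$ contrast in the lemma. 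It also centers Hoeffding at $e^*$ as if the simulator were unbiased, leaving $\delta_{\mathcal{M}}$ to Theorem~\ref{thm:identifiability}.

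\textbf{Your route.} You identify the \emph{magnitude} rather than the existence of the effect. Truncated factorization writes $\E[p\mid\doX(c_i=x)]$ as the $D$-adjusted mean, which is essentially Theorem~\ref{thm:confound_separation}. The simulator samples that distribution. Applying the TV bound to the $\{0,1\}$-valued verifier gives an explicit bias term. You then center Hoeffding correctly at the simulator mean, under stated i.i.d.\ sampling and a baseline sample size $M'\asymp M$.

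\textbf{What this buys.} Your conclusion is actually supported by its hypotheses: identification up to $O(\delta_{\mathcal{M}})$ plus $O_p(M^{-1/2})$ fluctuation. In effect you prove the lemma and the error decomposition of Theorem~\ref{thm:identifiability} together. You also expose the baseline mismatch that the paper glosses over.

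\textbf{The cost.} Your hypotheses go beyond Assumption~\ref{ass:llm_simulator}, which controls only the $\doX(c_i=1)$ arm.
\begin{itemize}
\item ``The same bound holds for the baseline arm'' is therefore an extension of the assumption. If the baseline is the model's own unprompted behaviour, that arm arguably carries no simulator error, and the bias is $\delta_{\mathcal{M}}$ rather than $2\delta_{\mathcal{M}}$.
\item Equating the natural regime with $\doX(c_i=0)$ rests on your ``concept inactive'' assumption, as you note.
\item Your remark that both arms are ``generated by exogenous prompting'' is not quite right for the SRV arm, though your next sentence corrects it.
\end{itemize}
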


\begin{theorem}[Identifiability of CIKA]
\label{thm:identifiability}
Under Assumption~\ref{ass:llm_simulator}, the causal structure $\hat{\mathcal{G}}$ is uniquely identifiable with error
$|\hat{e}_{c_i \to p} - e^*_{c_i \to p}| \leq O(M^{-1/2}) + O(\delta_{\mathcal{M}})$.
(Proof: Appendix~\ref{app:proofs}.)
\end{theorem}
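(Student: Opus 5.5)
The plan is a triangle-inequality decomposition of the error into a sampling term and a simulator term, followed by a separate argument that the resulting effect estimates determine $\hat{\mathcal{G}}$ uniquely. Let $e^{\mathcal{M}}_{c_i \to p} = \E_{\mathcal{M}}[\text{verify} \mid \doX(c_i=1)] - p^{\text{obs}}$ be the population effect under the LLM simulator. We have $\E_{\mathcal{M}}[\text{verify}\mid\doX(c_i=1)] = P_{\mathcal{M}}(\text{verify}=1\mid\doX(c_i=1))$, and $p^{\text{obs}}$ is the population SRV correctness rate, which plays the role of the $\doX(c_i=0)$/natural baseline arm in Lemma~\ref{lem:concept_ident}. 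Then
\[
|\hat{e}_{c_i\to p} - e^*_{c_i\to p}| \le |\hat{e}_{c_i\to p} - e^{\mathcal{M}}_{c_i\to p}| + |e^{\mathcal{M}}_{c_i\to p} - e^*_{c_i\to p}|,
\]
and I bound the two terms separately.

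For the first term, the $M$ post-intervention samples $\text{verify}(\hat{y}^{(m)},y^*)\in\{0,1\}$ are i.i.d.\ draws from the simulator, because the weights are frozen and sampling is independent. Hoeffding's inequality then gives deviation $O(\sqrt{\log(1/\eta)/M})$ with probability $1-\eta$, which is exactly the $O(M^{-1/2})$ rate of Lemma~\ref{lem:concept_ident}. The same bound applies to $\bar{p}^{\text{obs}}$ if it is estimated from $M$ SRV samples.

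For the second term, verify is a $\{0,1\}$-valued function of $\hat{y}$, so for any bounded $f$ with $\|f\|_\infty\le 1$ we have $|\E_P f - \E_Q f| \le 2\|P-Q\|_{TV}$. Assumption~\ref{ass:llm_simulator} therefore bounds the interventional mean discrepancy by $2\delta_{\mathcal{M}}$. The baseline arm involves no prompt, so $p^{\text{obs}}$ is the same quantity under $\mathcal{M}$ and $P^*$ and adds nothing. If one insists on the symmetric $\doX(c_i=0)$ arm, apply the assumption to it as well, giving $4\delta_{\mathcal{M}}$ in total. In either case the term is $O(\delta_{\mathcal{M}})$.

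The confounder $D$ contributes no extra bias because the prompt fixes $c_i$ exogenously and cuts the edge $D\to c_i$. By the backdoor argument behind Proposition~\ref{prop:confound_bias}, and since the intervention removes that backdoor path, $e^*$ is the true interventional contrast rather than the confounded observational one.

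The step I expect to be the main obstacle is \emph{unique identifiability of $\hat{\mathcal{G}}$}, because Fact~\ref{fact:nonident} says observational second moments do not pin down $B_0$. I would argue as follows:
\begin{itemize}
\item Each single-target intervention $\doX(c_i=1)$ replaces row $i$ of the SVAR in Eq.~\eqref{eq:svar} by a constant.
\item The interventional response of $p$ then identifies the total effect of $c_i$ on $p$, i.e.\ the $(p,c_i)$ entry of $(I-\tilde{B}_0)^{-1}$ after normalizing $B_0 = I - \tilde{B}_0$.
\item With interventions on every $c_i$, the full column of total effects into $p$, together with the inter-concept total effects, determines $\tilde{B}_0$ uniquely when the graph is acyclic. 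This uses the standard fact that total effects from all single-node interventions identify a DAG's weights.
\end{itemize}
The edges are then recovered by thresholding $\hat{e}$ at $z_{\alpha/2}\hat{\sigma}_{c_i}$. The selection is correct with high probability once $M$ is large and $\delta_{\mathcal{M}}$ is below the minimal nonzero effect size.

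This is the weakest link, because the paper only intervenes to learn effects into $p$, not all inter-concept effects. If needed, I would restrict the uniqueness claim to the parent set of $p$ in the total-effect sense, i.e.\ the set $\mathcal{K}^*$, which is what CKA uses. I would also state an explicit margin assumption $\min_{i:e^*\neq 0}|e^*_{c_i\to p}| > C(M^{-1/2}+\delta_{\mathcal{M}})$.
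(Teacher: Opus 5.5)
Your error bound is the paper's own argument. The appendix proof likewise inserts the simulator-population effect ($e^{\mathcal{S}}$ there, your $e^{\mathcal{M}}$). It bounds $|\hat e_{c_i\to p} - e^{\mathcal{S}}|$ by Hoeffding as in Lemma~\ref{lem:concept_ident}, and bounds $|e^{\mathcal{S}} - e^*_{c_i\to p}|$ through total variation using Assumption~\ref{ass:llm_simulator}. Your bookkeeping is more careful than the paper's: you add a high-probability qualifier and treat explicitly both the baseline arm and the sampling error in $\bar p^{\text{obs}}$. Since verify is $\{0,1\}$-valued you actually get $\delta_{\mathcal{M}}$ rather than $2\delta_{\mathcal{M}}$, which is immaterial.

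On identifiability you depart from the paper, which simply asserts two things: applying Lemma~\ref{lem:concept_ident} to every pair $(c_i,p)$ identifies the direct entry $(B_0)_{p,c_i}$, and lags follow from the response of $p_{t+\ell}$. Your objection is correct. The paper itself says ICP is a \emph{total} effect, so single-target interventions read out only at $p$ cannot separate $c_i\to p$ from $c_i\to c_j\to p$. Your fallback, the set of concepts with nonzero total effect, is exactly how $\hat{\mathcal{G}}$ is defined in Phase~2 of Algorithm~\ref{alg:cika}. It is therefore the defensible form of the claim, and it makes the lag matrices $B_\ell$, which you omit, unnecessary.

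Two steps of your version fail as written.

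First, in the linear bullet, the contrast against the natural baseline is $\E[p\mid\doX(c_i=1)]-\E[p] = T_{p,c_i}\,(1-\E[m_{c_i}])$ with $T=(I-\tilde B_0)^{-1}$, not $T_{p,c_i}$. To recover weights you must either estimate $\E[m_{c_i}]$ or add a $\doX(c_i=0)$ arm, which ICP does not sample and the assumption does not cover. For support recovery the factor is harmless as long as $\E[m_{c_i}]<1$.

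Second, and more seriously, the thresholding step breaks on the null side; it has no counterpart in the paper's proof.
\begin{itemize}
\item The cutoff $z_{\alpha/2}\hat\sigma_{c_i}$ shrinks like $M^{-1/2}$.
\item For a concept with $e^*_{c_i\to p}=0$, the assumption still permits $e^{\mathcal{M}}_{c_i\to p}\neq 0$ of order $\delta_{\mathcal{M}}$. The paper's negative control ($+0.039$) is exactly such a bias.
\item Such a concept is therefore selected with probability tending to one as $M$ grows, so ``correct once $M$ is large'' is false.
\item Your margin condition only rules out false negatives.
\end{itemize}
You need a cutoff $\tau$ that does not vanish with $M$, with $\tau\gtrsim\delta_{\mathcal{M}}+M^{-1/2}$ and $\min_{e^*\neq0}|e^*|\gtrsim\tau+\delta_{\mathcal{M}}+M^{-1/2}$. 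Alternatively, add an assumption that the simulator is exactly null on causally irrelevant concepts.
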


While identifiability from intervention data is established in the SVAR literature~\cite{kilian2017structural}, Theorem~\ref{thm:identifiability} provides a novel error decomposition specific to the LLM simulator setting: the first term $O(M^{-1/2})$ is the standard finite-sample error, while the second term $O(\delta_{\mathcal{M}})$ quantifies the systematic bias introduced by the imperfect nature of prompt-based intervention, a quantity absent from prior work.

\begin{theorem}[Separation of Latent Difficulty Confounding]
\label{thm:confound_separation}
Even when $D$ is present, $\doX(c_i=1)$ severs the dependence on $D$:
$P(p \mid \doX(c_i=1)) = \sum_D P(p \mid c_i=1, D) P(D)$.
(Proof: Appendix~\ref{app:proofs}.)
\end{theorem}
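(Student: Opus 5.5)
We would prove Theorem~\ref{thm:confound_separation} as a direct instance of Pearl's truncated factorization (equivalently, the backdoor adjustment with adjustment set $\{D\}$). We work in the full causal DAG $\mathcal{G}$ that includes the latent difficulty $D$, for which the Causal Markov Condition is assumed (\S\ref{sec:problem}). By Definition~\ref{def:confound}, $D$ has edges $D \to c_i$ (writing $c_i$ for $m_{c_i,t}$) and $D \to p$. Let $\mathrm{Pa}(c_i)$ denote the parents of $c_i$, which include $D$. First we write the observational joint distribution as the Markov factorization $P(p, c_i, D, \mathbf{v}) = P(D)\,P(c_i \mid \mathrm{Pa}(c_i))\,P(p \mid \mathrm{Pa}(p)) \prod_{v} P(v \mid \mathrm{Pa}(v))$, where $\mathbf{v}$ collects the remaining concept and lag variables.

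Second, we define $\doX(c_i=1)$ as graph surgery. In the mutilated graph $\mathcal{G}_{\overline{c_i}}$ every incoming edge to $c_i$ is deleted, and in particular $D \to c_i$ is removed. The interventional distribution is then the truncated product: drop the factor $P(c_i \mid \mathrm{Pa}(c_i))$ and substitute $c_i=1$ everywhere else. Marginalizing over $\mathbf{v}$ and then over $D$ gives
\begin{equation*}
P(p \mid \doX(c_i=1)) = \sum_{D} P(D)\, P(p \mid c_i=1, D).
\end{equation*}
The role of the prompt is justified by the discussion after Assumption~\ref{ass:llm_simulator}. The prompt sets $c_i$ exogenously, so $D$ keeps its marginal $P(D)$ and is no longer tied to $c_i$. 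This is the precise sense in which the intervention ``severs the dependence on $D$.'' It also contrasts with the observational quantity $\sum_D P(p\mid c_i=1,D)P(D\mid c_i=1)$ that underlies Proposition~\ref{prop:confound_bias}.

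\textbf{Main obstacle: collapsing the marginalization over $\mathbf{v}$.} The previous step holds as written only if $\{D\}$ satisfies the backdoor criterion relative to $(c_i,p)$. That requires two conditions: no descendant of $c_i$ lies in $\{D\}$, which is immediate since $D$ is a root, and every backdoor path from $c_i$ to $p$ is blocked by $D$. If other concepts $c_j$ are also parents of $c_i$, a path $c_i \leftarrow c_j \to p$ would not be blocked by $D$ alone. We plan to handle this in one of two ways, and would try the first option first.
\begin{enumerate}[label=(\roman*)]
\item Read $P(p\mid c_i=1,D)$ as already averaging over the remaining non-descendant parents of $p$ under their $D$-conditional law. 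This holds when $D$ is the only common cause of $c_i$ and $p$, as in Definition~\ref{def:confound}, because then every backdoor path from $c_i$ must pass through $D$.
\item Otherwise, state the result with the adjustment set enlarged to $\mathrm{Pa}(c_i)$, and note that it reduces to the displayed formula under the single-confounder model.
\end{enumerate}

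Finally, for the lag structure of Definition~\ref{def:concept_svar}, the same argument applies at each time $t$. The lagged variables $\mathbf{x}_{t-\ell}$ are non-descendants of $c_{i,t}$, so they can be absorbed into the parent set without affecting the backdoor check.
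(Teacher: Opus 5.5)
Your proposal is correct and follows the same route as the paper's proof: graph surgery deletes $D \to c_i$, and the backdoor criterion with $\{D\}$ as the adjustment set then gives the displayed formula. You are more explicit than the paper, which tacitly assumes that $D$ is the only common cause of $c_i$ and $p$ (ruling out, e.g., a prerequisite concept $c_j$ with $c_j \to c_i$ and $c_j \to p$) when it asserts that no backdoor path survives.

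One caution on your final paragraph: non-descendancy of the lagged variables only secures the first backdoor condition. Lagged common parents of $m_{c_i,t}$ and $p_t$ (e.g.\ $m_{c_i,t-1}$ under mastery persistence in $B_1$) must either be excluded by the single-confounder assumption of option (i), or be adjusted for as in option (ii); in the latter case you prove the adjustment formula over the enlarged set, not the displayed one.
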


The SRV baseline $\bar{p}^{\text{obs}}$ contains confounding bias, but the post-intervention rate has the $D \to c_i$ pathway severed by $\doX(c_i=1)$; ICP removes the confounding component through this difference.

\subsection{Algorithm}
\label{subsec:algorithm}

The theoretical framework of \S\ref{subsec:llm_simulator}--\S\ref{subsec:confound} is integrated into the reasoning framework as follows. For a problem $P$ on which SRV has failed, the LLM analyzes the incorrect answer $\hat{y}_0$ and identifies deficient concepts at three levels: HIGH (well-understood), MEDIUM (partially understood), and LOW (not understood). We call this diagnostic step \textit{ConceptGap}; concepts judged as MEDIUM/LOW form the candidate set $\mathcal{K}_0$, complemented by a BM25 search\footnote{BM25 (Best Matching 25) is a standard term-frequency-based ranking function widely used in information retrieval. We use it to retrieve mathematically similar problems and their solutions from a corpus of 244,530 entries.} over 244,530 entries. After estimating ICP values for each concept $c_i \in \mathcal{K}_0$, the causal effect sizes are incorporated into the UCB term within the causal bandit framework~\cite{lattimore2016causal}:

\begin{definition}[Math-Causal-UCB]
\label{def:causal_ucb}
\begin{equation}
\text{UCB}^{\text{causal}}(s, a) =
Q(s,a)
+ \beta\sqrt{\frac{\ln N(s)}{N(s,a)}}
+ \gamma \cdot \hat{e}_{a \to p}
\label{eq:causal_ucb}
\end{equation}
where $\hat{e}_{a \to p}$ is the causal effect estimate from ICP (Eq.~\eqref{eq:icp}), and $\beta, \gamma$ are hyperparameters. This applies the C-UCB of Lattimore et al.~\cite{lattimore2016causal} to the LLM simulator setting (Assumption~\ref{ass:llm_simulator}).
\end{definition}

Algorithm~\ref{alg:cika} presents the complete procedure; auxiliary methods (Flow Matching, Multi-Lens Recovery, Causal Reward Shaping) are detailed in Appendix~\ref{app:auxiliary}.

\begin{algorithm}[h]
\caption{CIKA}
\label{alg:cika}
\begin{algorithmic}[1]
\REQUIRE Problem $P$, LLM $\mathcal{M}$, BM25 index $\mathcal{I}$, budget $B$, sample count $M$
\ENSURE Answer $\hat{y}$, ICP values $\{\hat{e}_{c_i \to p}\}$

\STATE \textbf{Phase 0 (SRV):} $\hat{y}_0 \leftarrow \mathcal{M}(P)$. If correct, return $\hat{y}_0$.
\STATE $\mathcal{K}_0 \leftarrow \text{ConceptGap}(P, \hat{y}_0, \mathcal{M}) \cup \text{BM25}(P, \mathcal{I})$
\STATE \textbf{Phase 1 (ICP):} For each $c_i \in \mathcal{K}_0$, run $M$ interventions $\mathcal{S}_{\text{LLM}}.\doX(c_i{=}1)$ and compute $\hat{e}_{c_i \to p}$ via Eq.~\eqref{eq:icp}.
\STATE \textbf{Phase 2 (Causal graph):} $\hat{\mathcal{G}} \leftarrow \{c_i \to p : |\hat{e}_{c_i \to p}|/\hat{\sigma}_{c_i} > z_{\alpha/2}\}$
\STATE \textbf{Phase 3 (MCTS):} Run $B$ iterations of Math-Causal-UCB (Eq.~\eqref{eq:causal_ucb}); return $\hat{y}$ if correct.
\STATE \textbf{Phase 4 (Recovery):} $\hat{y} \leftarrow \text{MultiLens}(P, s_{\text{best}}, \mathcal{M})$ \hfill(Appendix~\ref{app:auxiliary})
\end{algorithmic}
\end{algorithm}
\section{Experiments}
\label{sec:experiments}

We evaluate CIKA along four research questions (RQ1--RQ4). The base model is Qwen2.5-Math-7B-Instruct~\cite{qwen2025math}, served via vLLM (RTX 4090 cluster, 5 nodes in parallel) with no parameter updates. Benchmarks are Omni-MATH-Rule~\cite{gao2024omnimath} (2,821 problems, primary evaluation), Omni-MATH overall (4,415 problems), MATH-500~\cite{lightman2023lets} (500 problems, with contamination note), AIME 2024--2026 (30 problems each), and MathArena~\cite{balunovic2025matharena} (Final-Answer 130 + Proof-Based 12 problems). Hyperparameters: $M{=}10$ (per concept), $B{=}60$, $\beta{=}1.0$, $\gamma{=}0.5$, $\alpha{=}0.05$.

\textbf{RQ1: Is prompt intervention close to a do-operation?}
\label{subsec:delta_validation}

We conduct experimental validation of Assumption~\ref{ass:llm_simulator} by testing whether ICP values reflect the causal relevance of concepts.

We use 67 problems from the MATH training set (Level~4--5, 12,000 problems total), pre-screened to have SRV correctness rates between 10\% and 50\%---the range where the base model struggles but may possess latent knowledge. For each problem, we extract all candidate concepts (3--15 per problem) via BM25 search over the OpenMathReasoning corpus (244,530 entries) and direct extraction from solution texts, without fixing the granularity of concepts. We then perform do-interventions on \textit{every} extracted concept ($M=10$ trials each), compute its ICP value, and rank concepts by ICP. As a negative control, we also intervene with a concept from a clearly unrelated mathematical domain (e.g., ``Angle Bisector Theorem'' for an algebra problem).

If prompt intervention functions as a do-operation rather than mere conditioning, the top-ranked concept (most causally relevant) should have a significantly higher ICP than the negative control (causally irrelevant). If prompt intervention were mere conditioning, inter-concept correlations would cause even irrelevant concepts to show elevated ICP values.

Table~\ref{tab:icp_validation} summarizes the results.

The top-1 ICP ($+0.219$) is 5.6$\times$ the negative control ($+0.039$; paired $t$: $t{=}6.95$, $p < 10^{-6}$, Cohen's $d = 0.86$). In 79.1\% of problems, the top-ranked concept has positive ICP. On average, only 42\% of extracted concepts have ICP $> 0$, confirming selectivity rather than a generic prompting effect. The negative control is small but not zero ($+0.039$), attributable to a mild generic effect of ``Assume mastery of X''; this is an order of magnitude below the top-1 ICP.

To verify that ICP functions on the primary benchmark and not only on the MATH validation set, we extract ATE values from the full Omni-MATH experiment logs (601 problems, 8 machines). Table~\ref{tab:omni_ate} shows that 171 of 601 problems (28.5\%) have ATE $> 0$, with a mean ATE of 0.372 among effective problems. The lower rate compared to the MATH validation (42\%) is expected, as Omni-MATH problems are substantially harder. Table~\ref{tab:sim_ate} breaks down ATE by simulator type: the concept-intervention simulator (llm\_staged) is effective on the largest number of problems (62), confirming that the core ICP/CKA mechanism is the most broadly applicable, while transform-based and intensive simulators achieve higher mean ATE when effective. Further analysis including solution pathway analysis and pipeline utilization is provided in Appendix~\ref{app:icp_omnimath}.

\begin{table}[t]
\begin{minipage}[t]{0.48\textwidth}
\centering
\footnotesize
\caption{ATE (ICP) distribution on Omni-MATH (601 problems, 8 machines).}
\label{tab:omni_ate}
\begin{tabular}{lrr}
\toprule
\textbf{Condition} & \textbf{\#Prob.} & \textbf{Ratio} \\
\midrule
ATE $> 0$ & 171 & 28.5\% \\
\quad $> 0.3$ (strong)  & 114 & 19.0\% \\
\quad $> 0.5$ (v.\ strong)    &  25 &  4.2\% \\
ATE $= 0$ & 430 & 71.5\% \\
\midrule
\multicolumn{3}{l}{Among ATE $> 0$ ($n{=}171$):} \\
\quad Mean & \multicolumn{2}{c}{0.372} \\
\quad Median & \multicolumn{2}{c}{0.333} \\
\bottomrule
\end{tabular}
\end{minipage}
\hfill
\begin{minipage}[t]{0.48\textwidth}
\centering
\footnotesize
\caption{Simulator-wise ATE on Omni-MATH. Mean: over ATE$>$0 problems.}
\label{tab:sim_ate}
\begin{tabular}{lrc}
\toprule
\textbf{Simulator} & \textbf{\#ATE$>$0} & \textbf{Mean} \\
\midrule
llm\_staged & 62 & 0.300 \\
transform   & 41 & 0.398 \\
intensive   & 34 & 0.416 \\
bm25\_analogy & 33 & 0.220 \\
numeric     & 18 & 0.303 \\
transform\_st. & 14 & 0.479 \\
sympy       & 12 & 0.238 \\
\bottomrule
\end{tabular}
\end{minipage}
\end{table}

\begin{table}[t]
\begin{minipage}[t]{0.48\textwidth}
\centering
\footnotesize
\caption{ICP validation (67 problems, MATH train Level 4--5).}
\label{tab:icp_validation}
\begin{tabular}{lcc}
\toprule
\textbf{Measure} & \textbf{Value} & \textbf{SE} \\
\midrule
Top-1 ICP           & $+0.219$ & $\pm 0.028$ \\
Neg.\ control ICP   & $+0.039$ & $\pm 0.027$ \\
Advantage           & $+0.181$ & $\pm 0.026$ \\
\midrule
Paired $t$          & $t=6.95$ & $p < 10^{-6}$ \\
Cohen's $d$         & $0.856$  & --- \\
\midrule
Top-1 ICP $> 0$     & 79.1\%   & (53/67) \\
Advantage $> 0$     & 77.6\%   & (52/67) \\
\bottomrule
\end{tabular}
\end{minipage}
\hfill
\begin{minipage}[t]{0.48\textwidth}
\centering
\footnotesize
\caption{Accuracy by difficulty (Omni-MATH-Rule). CKA Gain: improvement due to CKA.}
\label{tab:omnimath_diff}
\begin{tabular}{lcc}
\toprule
\textbf{Difficulty} & \textbf{Acc.} & \textbf{CKA Gain} \\
\midrule
Easy (1--3)    & 99.3\% & +53.9pt \\
Medium (3--5)  & 69.4\% & +33.4pt \\
Hard (5--7)    & 52.3\% & +26.3pt \\
Very Hard (7+) & 25.7\% & +12.2pt \\
\midrule
\textbf{Total} & \textbf{69.7\%} & \textbf{+43.7pt} \\
\bottomrule
\end{tabular}
\end{minipage}
\end{table}

\textbf{RQ2: Does ICP correctly diagnose the LLM's knowledge state?}

The most direct evidence of ICP's effectiveness is whether CKA (Causal Knowledge Activation) produces additional correct answers on problems where SRV failed. Table~\ref{tab:srv_cka} shows the SRV/CKA separation results.

\begin{table}[h]
\centering
\small
\caption{Separation analysis of SRV and CKA. CKA denotes additional correct answers via causal intervention on SRV-failed problems.}
\label{tab:srv_cka}
\begin{tabular}{lccc}
\toprule
\textbf{Benchmark} & \textbf{SRV Correct} & \textbf{CKA Additional} & \textbf{Total} \\
\midrule
MATH-500       & 393 (78.6\%) & 72 (14.7\%) & 465 (99.2\%$^*$) \\
Omni-MATH-Rule & 1,011 (35.8\%) & 955 (33.8\%) & 1,966 (69.7\%) \\
\bottomrule
\end{tabular}
\begin{flushleft}
\footnotesize $^*$Contamination note applies (\S\ref{subsec:contamination}).
\end{flushleft}
\end{table}

On Omni-MATH-Rule, the CKA-specific contribution (33.8\%) is comparable to SRV (35.8\%), indicating that ICP-based causal intervention nearly doubles the base model's capability. This demonstrates that the LLM \textit{already possessed but had not activated} the knowledge needed for Omni-MATH problems, providing empirical evidence for the prevalence of the $\text{ICP} \gg 0$ (knowledge present but not activated) state defined in Definition~\ref{def:icp}. CKA improvement is confirmed across all difficulty levels (Table~\ref{tab:omnimath_diff}), decreasing monotonically from easy (+53.9pt) to very hard (+12.2pt), consistent with $\delta_{\mathcal{M}}$ increasing for harder problems. A particularly striking result emerges from comparing the ATE of solved vs.\ unsolved problems (Table~\ref{tab:domain_main}, Total row): solved problems have a mean max ATE of 0.338, while unsolved problems have only 0.055---a 6.1$\times$ difference. This confirms that high ICP is not merely correlated with correctness but is \textit{predictive} of actual problem-solving success, providing strong evidence for the causal mechanism underlying CKA.

ICP effectiveness also varies across mathematical domains (Table~\ref{tab:domain_main}): combinatorics (40\%) and number theory (33\%) are most amenable to prompt intervention, while geometry (17\%) is least responsive, consistent with the intuition that geometric reasoning requires visual inference, which is less amenable to textual prompt intervention. As shown in RQ1, the concept-intervention simulator (llm\_staged) is effective on the largest number of problems (62 of 171 with ATE $> 0$), confirming that the core ICP/CKA mechanism is the most broadly applicable strategy. Further analysis including solution pathway analysis and pipeline utilization is provided in Appendix~\ref{app:icp_omnimath}.

\begin{table}[h]
\centering
\small
\caption{ICP effectiveness by mathematical domain on Omni-MATH (601 problems from experiment logs). Solved/Unsolved: mean max ATE for problems solved vs.\ not solved by the framework.}
\label{tab:domain_main}
\begin{tabular}{lrrcc}
\toprule
\textbf{Domain} & \textbf{\#Prob.} & \textbf{\#ATE$>$0} & \textbf{Rate} & \textbf{Solved / Unsolved ATE} \\
\midrule
Combinatorics &  35 & 14 & 40\% & 0.452 / 0.068 \\
General       &  96 & 34 & 35\% & 0.371 / 0.052 \\
Number Theory & 216 & 72 & 33\% & 0.339 / 0.055 \\
Algebra       &  89 & 28 & 31\% & 0.312 / 0.048 \\
Inequality    &  30 &  7 & 23\% & 0.280 / 0.041 \\
Geometry      &  90 & 15 & 17\% & 0.255 / 0.062 \\
Calculus      &   6 &  1 & 17\% & 0.333 / 0.000 \\
\midrule
\textbf{Total} & \textbf{601} & \textbf{171} & \textbf{28.5\%} & \textbf{0.338 / 0.055} \\
\bottomrule
\end{tabular}
\end{table}

\textbf{RQ3: Component analysis.}

We analyze the contribution of each component through two complementary approaches: a controlled ablation study on a 100-problem sample, and full-scale component analysis across the entire experimental suite (8,000+ problems).

Table~\ref{tab:ablation} shows the result of removing one component at a time from the full system, evaluated on a random sample of 100 problems from Omni-MATH-Rule.

\begin{table}[t]
\centering
\footnotesize
\caption{
  Controlled ablation (100-problem random sample from Omni-MATH-Rule).
  The full system accuracy on this sample is 69.7\%, which coincidentally matches the full-benchmark accuracy.
  Drop: difference from the full system on this sample.
}
\label{tab:ablation}
\begin{tabular}{lccl}
\toprule
\textbf{Setting} & \textbf{Acc.} & \textbf{Drop} & \textbf{Contribution} \\
\midrule
CIKA (full)  & \textbf{69.7} & --- & --- \\
\midrule
(1) No causal effect (std.\ UCB)   & 34.0 & $-$35.7pt & ICP/CKA (\S\ref{subsec:icp}) \\
(2) No confound ctrl.\ (no diff.\ adj.) & 36.0 & $-$33.7pt & Confound sep.\ (\S\ref{subsec:confound}) \\
(3) No Multi-Lens Recovery       & 42.0 & $-$27.7pt & Auxiliary (App.) \\
(4) No SVAR ident.\ (obs.\ corr.)  & 44.0 & $-$25.7pt & LLM sim.\ (\S\ref{subsec:llm_simulator}) \\
(5) No Flow Matching (linear)    & 48.0 & $-$21.7pt & Auxiliary (App.) \\
No intervention (BM25 few-shot only) & 31.6 & $-$38.1pt & --- \\
\midrule
SRV baseline                     & 35.8 & $-$33.9pt & --- \\
\bottomrule
\end{tabular}
\end{table}

ICP/CKA removal causes the largest drop ($-$35.7pt), followed by confounder separation ($-$33.7pt) and SVAR identification ($-$25.7pt), confirming that the theoretical core drives the majority of the performance gain.

Table~\ref{tab:solver} shows which simulator actually produced correct answers among the 108 CKA-solved problems on Omni-MATH. The concept-intervention simulator (llm\_concept) is the most broadly effective, solving 46\% of all solved problems, confirming that the core ICP/CKA mechanism is the primary driver. The full pipeline is actively utilized: problem decomposition succeeded for 72.2\% of problems (434/601), MCTS search was invoked for 78.0\% (469/601), and Multi-Lens Recovery was attempted for 64.2\% (386/601).

The controlled ablation is complemented by full-scale analyses across 8,000+ problems (detailed in Appendix~\ref{app:icp_omnimath}): (i)~on Omni-MATH-Rule (2,821 problems), CKA contributes 33.8\% of correct answers, comparable to SRV (35.8\%), consistent with the ablation; (ii)~only 42\% of extracted concepts have ICP $> 0$ (563 concept--problem pairs), confirming selectivity; (iii)~CKA effectiveness generalizes across difficulty levels (+53.9pt easy to +12.2pt very hard), across years (AIME 2024--2026: 46--50\%), and across problem types (numerical and proof-based).

\begin{table}[t]
\begin{minipage}[t]{0.48\textwidth}
\centering
\footnotesize
\caption{Successful solvers among 108 CKA-solved problems (Omni-MATH).}
\label{tab:solver}
\begin{tabular}{lr}
\toprule
\textbf{Simulator} & \textbf{\#Solved} \\
\midrule
llm\_concept & 50 (46\%) \\
bm25\_analogy & 26 (24\%) \\
transform    & 12 (11\%) \\
intensive    & 10 (9\%) \\
numeric/sympy & 10 (9\%) \\
\midrule
\textbf{Total} & \textbf{108} \\
\bottomrule
\end{tabular}
\end{minipage}
\hfill
\begin{minipage}[t]{0.48\textwidth}
\centering
\footnotesize
\caption{Omni-MATH results. $\ddagger$: $>$7B.}
\label{tab:omnimath}
\begin{tabular}{lccc}
\toprule
\textbf{Method} & \textbf{Rule} & \textbf{All} & \textbf{Causal} \\
\midrule
SRV (base) & 35.8 & $\sim$26 & $\times$ \\
\textbf{CIKA}  & \textbf{69.7} & \textbf{64.0} & $\bigcirc$ \\
\midrule
o1-mini$^\ddagger$ & -- & 60.5 & $\times$ \\
Qwen-72B$^\ddagger$ & -- & $\sim$40 & $\times$ \\
DSR1$^\ddagger$ & -- & $\sim$55 & $\times$ \\
\bottomrule
\end{tabular}
\end{minipage}
\end{table}

\textbf{RQ4: Overall performance.}

\label{subsec:omnimath_results}
On Omni-MATH-Rule (2,821 problems), CIKA achieves \textbf{69.7\%}; on the full set (4,415 problems), \textbf{64.0\%}, compared to 60.5\% for o1-mini (Table~\ref{tab:omnimath}).

Table~\ref{tab:main} shows comparisons across benchmarks, and Table~\ref{tab:matharena_fa} shows MathArena Final-Answer results by competition.
\begin{table}[t]
\begin{minipage}[t]{0.54\textwidth}
\centering
\footnotesize
\caption{Benchmark comparison. $\dagger$: FT; $\ddagger$: $>$7B.}
\label{tab:main}
\begin{tabular}{lccccc}
\toprule
 & \textbf{MATH} & \textbf{GSM} & \multicolumn{3}{c}{\textbf{AIME}} \\
 & \textbf{500} & \textbf{8K} & \textbf{'24} & \textbf{'25} & \textbf{'26} \\
\midrule
Greedy CoT       & 83.6 & 85.5 & 16.7 & -- & -- \\
Maj@8            & 87.1 & 90.3 & 26.7 & -- & -- \\
rStar$^\dagger$  & 90.0 & --   & 43.3 & -- & -- \\
\midrule
\textbf{CIKA} & \textbf{99.2}$^*$ & \textbf{97.2} & \textbf{46.7} & \textbf{50.0} & \textbf{46.7} \\
\midrule
o1-mini$^\ddagger$ & 93.0 & 95.8 & 53.3 & -- & -- \\
\bottomrule
\end{tabular}
\begin{flushleft}
\scriptsize $^*$Contam.\ concerns~\cite{wu2025contamination}; +14.7pt (App.~\ref{app:contamination}).
\end{flushleft}
\end{minipage}
\hfill
\begin{minipage}[t]{0.42\textwidth}
\centering
\footnotesize
\caption{MathArena FA results (pass@1, 1-shot, 130 problems).}
\label{tab:matharena_fa}
\begin{tabular}{lcc}
\toprule
\textbf{Competition} & \textbf{Corr.} & \textbf{Acc.} \\
\midrule
AIME 2025   & 15/30 & 50.0\% \\
BrUMO 2025  & 21/30 & 70.0\% \\
CMIMC 2025  & 13/40 & 32.5\% \\
HMMT 2025   & 11/30 & 36.7\% \\
\midrule
\textbf{Total} & \textbf{60/130} & \textbf{46.2\%} \\
\bottomrule
\end{tabular}
\end{minipage}
\end{table}

AIME 2024--2026 results are stable at 46--50\%, suggesting low contamination impact. On GSM8K, CIKA achieves 97.2\%, exceeding o1-mini (95.8\%) and demonstrating near-saturation on this benchmark. On MathArena Final-Answer (130 problems), CIKA achieves 46.2\%. On Proof-Based competitions (Table~\ref{tab:matharena_pb}), CIKA scores only 3/84 points (3.6\%), reflecting a fundamental limitation of the 7B model scale for formal proof generation; notably, in human expert grading by Petrov et al.~\cite{petrov2025usamo}, all state-of-the-art models except Gemini-2.5-Pro (24.4\%) scored below 5\%. Since MATH-500's 99.2\% has base model contamination concerns, we position Omni-MATH's 69.7\%/64.0\% as the primary results.

\begin{table}[h]
\centering
\footnotesize
\caption{MathArena Proof-Based results (human grading, 7 points/problem).}
\label{tab:matharena_pb}
\begin{tabular}{lcccc}
\toprule
\textbf{Competition} & \textbf{\#Prob.} & \textbf{none/partial/complete} & \textbf{Score} & \textbf{Rate} \\
\midrule
IMO 2025   & 6 & 5 / 0 / 1 & 0/42  & 0.0\% \\
USAMO 2025 & 6 & 0 / 4 / 2 & 3/42  & 7.1\% \\
\midrule
\textbf{Total} & \textbf{12} & \textbf{5 / 4 / 3} & \textbf{3/84} & \textbf{3.6\%} \\
\bottomrule
\end{tabular}
\end{table}

\label{subsec:contamination}

Wu et al.~\cite{wu2025contamination} have raised contamination concerns for MATH-500. The 72 CKA-correct problems (14.7\%) in our framework are obtained through interventions on SRV-failed problems and represent a \textbf{contamination-independent contribution}. Detailed contamination analysis is provided in Appendix~\ref{app:contamination}.

\section{Conclusion}
\label{sec:conclusion}

We proposed \textbf{CIKA}, a framework that formalizes LLM prompts as causal interventions. The key insight is that ``knowing'' and ``being able to use'' are distinct---a distinction operationalized by the \textbf{Interventional Capability Probe} (ICP), which diagnoses the LLM's knowledge state by estimating causal effects from prompt interventions. ICP discriminates causally relevant concepts from irrelevant ones ($p < 10^{-6}$, Cohen's $d = 0.86$), and Causal Knowledge Activation (CKA) activates the diagnosed knowledge while separating latent difficulty confounding (Theorem~\ref{thm:confound_separation}). With a 7B-parameter LLM (frozen), CIKA achieves 69.7\% on Omni-MATH-Rule (64.0\% overall), compared to 60.5\% for o1-mini, with CKA contributing 33.8\% on SRV-failed problems. Future directions include integrating ICP into LLM training, mechanistic interpretability verification of the simulator assumption, and extension beyond mathematics.

\section*{Limitations}

Three limitations should be noted. First, CIKA requires roughly 110 LLM calls per problem ($M \times |\mathcal{K}_0| + B \approx 50 + 60$), whereas SRV alone (1 call) achieves 35.8\%; a Best-of-110 baseline would achieve high accuracy on easy problems but offers diminishing returns where the base model's per-trial success rate is near zero---precisely the regime where CKA is most effective. The o1-mini comparison is also not compute-normalized, and ICP estimates can be amortized across problems within the same domain. Second, the ICP validation provides evidence that prompt intervention approximates a do-operation, but complete equivalence has not been proven; if the LLM's internal representation of $c_i$ is misaligned with its mathematical meaning, the intervention may activate a different concept than intended, and the negative control experiment mitigates but does not fully eliminate this risk. Third, the current pairwise intervention design does not capture interaction effects between concepts (e.g., $c_i$ and $c_j$ are individually ineffective but jointly sufficient); extending ICP to combinatorial interventions is a direction for future work. The controlled ablation uses a 100-problem sample, and comparisons with o1-mini and DeepSeek-R1 are not compute-normalized, as their inference costs are not publicly available.

\bibliographystyle{unsrt}

\bibliography{cika}

\appendix

\section{Details of MathArena Evaluation}
\label{app:matharena}

Out of the 162 problems in MathArena~\cite{balunovic2025matharena}'s \texttt{paper\_benchmark}, we applied CIKA to 142 problems, excluding 20 Euler problems that require code execution. Final-Answer results are shown in Table~\ref{tab:matharena_fa} and Proof-Based results in Table~\ref{tab:matharena_pb} in the main text. There is variation across Final-Answer competitions, with BrUMO 2025 (70.0\%) being the highest and CMIMC 2025 (32.5\%) the lowest. Note that the official MathArena evaluation computes pass@1 over 4 runs per model, whereas our experiment uses a single shot.

The only Proof-Based problem on which points were earned was USAMO 2025 Problem~5, where the model found the answer ($k=2$), negated $k=1$, and confirmed $k=2$, but the proof of the general negation for $k \ge 3$ was insufficient. The fact that three problems auto-judged as \texttt{process\_quality=complete} received 0 points in human grading demonstrates the limitations of automatic scoring.

\section{Details of Auxiliary Methods}
\label{app:auxiliary}

\subsection{Learning Nonlinear Causal Mechanisms via Flow Matching}

Following prior structural VAR-based causal discovery methods, nonlinear causal mechanisms are learned via conditional Flow Matching~\cite{lipman2022flow}:
\begin{equation}
\frac{d\mathbf{x}_t}{d\tau} = v_\theta(\mathbf{x}_\tau, \tau \mid \mathbf{c})
\label{eq:fm_ode}
\end{equation}
The conditioning vector $\mathbf{c} = (c_i, \text{domain}(P), \text{difficulty}(P))$ includes the problem domain and difficulty, and the model is trained with the CFM loss:
\begin{equation}
\mathcal{L}_{\text{CFM}}(\theta) =
\E_{\tau, \mathbf{x}_0, \mathbf{x}_1, \mathbf{c}}
\left[\left\|v_\theta(\mathbf{x}_\tau, \tau \mid \mathbf{c})
- (\mathbf{x}_1 - \mathbf{x}_0)\right\|^2\right]
\label{eq:cfm_loss}
\end{equation}
In the ablation experiments (Table~\ref{tab:ablation}), removing Flow Matching results in only a $-$21.7pt drop, indicating that a linear causal approximation can still achieve reasonable performance.

\subsection{Multi-Lens Recovery}
\label{subsec:multilens}

When the MCTS search fails to produce a correct answer, 12 mathematical perspectives (lenses) are tried in parallel, and the answer from the first successful lens is adopted:
\begin{equation}
\hat{y}_{\text{lens}} = \text{first\_success}\{\mathcal{M}(\ell_j, P) : j=1,\ldots,12\}
\label{eq:multilens}
\end{equation}
The 12 perspectives are: direct solution, proof by contradiction, mathematical induction, contrapositive, constructive proof, pigeonhole principle, extremal principle, invariant, coordinate transformation, complexification, graph transformation, and probabilistic method. In ablation experiments, this component contributes $-$27.7pt.

\subsection{Causal Reward Shaping (CRS-PPO) Preliminary Experiment}
\label{subsec:crs_exp}

As an extension incorporating the causal effect size $\hat{e}_{c_i \to p}$ into reinforcement learning reward shaping, a preliminary experiment with CRS-PPO was conducted (Table~\ref{tab:crs_exp}).
\begin{equation}
r_{\text{CRS}} = \text{verify}(\hat{y}, y^*)
+ \lambda \cdot \hat{e}_{a_t \to p}
\label{eq:crs}
\end{equation}

\begin{table}[h]
\centering
\footnotesize
\caption{CRS-PPO preliminary results (GSM8K, 100 problems, 5 epochs).}
\label{tab:crs_exp}
\begin{tabular}{lccc}
\toprule
\textbf{Method} & \textbf{Accuracy} & \textbf{Mean Reward} & \textbf{Improvement} \\
\midrule
Standard PPO                  & 93.0\% & 0.9300 & --- \\
CRS-PPO ($\lambda=0.3$)  & 93.0\% & 0.9400 & $+$1.1\% \\
CRS-PPO ($\lambda=0.1$)  & \textbf{94.0\%} & \textbf{0.9430} & $+$1.4\% \\
\bottomrule
\end{tabular}
\end{table}

With $\lambda=0.1$, accuracy improved from 93.0\% to 94.0\% and reward by +1.4\%. Full-scale validation is left for future work.

\subsection{Causal Graph Analysis}

Table~\ref{tab:causal_concepts} shows the dominant causal concepts per problem category.

\begin{table}[h]
\centering
\small
\caption{Dominant causal concepts (top-3 by ICP value $\hat{e}$) per problem category.}
\label{tab:causal_concepts}
\begin{tabular}{ll}
\toprule
\textbf{Category} & \textbf{Dominant Causal Concepts (by $\hat{e}$)} \\
\midrule
Algebra       & Vieta's Formulas, Completing the Square, \ldots \\
Number Theory & Modular Arithmetic, Fermat's Theorem, \ldots \\
Geometry      & Similar Triangles, Power of a Point, \ldots \\
Combinatorics & Inclusion-Exclusion, Stars \& Bars, \ldots \\
Analysis      & Integration by Parts, L'H\^opital, \ldots \\
\bottomrule
\end{tabular}
\end{table}

Results are based on the top-3 concepts by ICP value for each category, extracted from Omni-MATH experiment logs (601 problems). See Appendix~\ref{app:icp_omnimath} for the full analysis.

\section{Detailed Justification of Assumption~\ref{ass:llm_simulator}}
\label{app:assumption_detail}

\subsection{Three Sufficient Conditions for the Assumption to Hold Approximately}

The following conditions, when any one is satisfied, make $\delta_{\mathcal{M}}$ small:

\begin{enumerate}[label=(SC\arabic*)]
\item \textbf{Concept modularity}:
  When concept $c_i$ is locally encoded in the LLM's internal representations
  (as suggested by the activation patching findings of Li et al.~\cite{li2024inferencetime}
  and the causal tracing results of Meng et al.~\cite{meng2022locating}),
  prompt intervention selectively activates only the representations related to $c_i$.
  Recent mechanistic interpretability research suggests that mathematical concepts
  are processed in relatively localized subnetworks~\cite{neel2022comprehensive},
  and this condition is particularly likely to hold in the mathematical domain.

\item \textbf{Concept independence}:
  When there are no strong causal dependencies among the elements of the concept set
  $\mathcal{K}_0$ required for problem $P$ ($c_i \not\to c_j$ for $i \neq j$),
  intervention on $c_i$ does not change the state of $c_j$,
  and prompt intervention approximates true intervention.

\item \textbf{Knowledge internalization through large-scale pretraining}:
  An LLM pretrained on a sufficient volume of mathematical data
  is in a state where it ``already possesses but has not activated'' the knowledge of concept $c_i$,
  and the prompt functions as \textbf{activation} of existing knowledge
  rather than injection of new knowledge.
  Qwen2.5-Math-7B is pretrained on the broad mathematical data of
  OpenMathReasoning~\cite{nvidia2025openmath},
  and is expected to satisfy this condition.
\end{enumerate}

\subsection{Robustness Analysis Against Violations of the Assumption}

Even when Assumption~\ref{ass:llm_simulator} does not hold strictly,
the framework functions robustly through the following three mechanisms:

\textit{(i) Preservation of relative ordering of causal effects.}
The practical requirement of the framework is not accurate estimation
of $e^*_{c_i \to p}$, but correct estimation of the
\textbf{relative ordering} of causal effects among concepts.
By Theorem~\ref{thm:identifiability}, the ordering of causal effects
for two concepts reverses only when
$|e^*_{c_i \to p} - e^*_{c_j \to p}| < 2\delta_{\mathcal{M}}$.

\textit{(ii) Self-correction capability of MCTS.}
In Math-Causal-UCB (Eq.~\eqref{eq:causal_ucb}),
bias in $\hat{e}$ distorts initial exploration,
but feedback from MCTS rollouts updates $Q(s,a)$
and asymptotically corrects the initial bias.

\textit{(iii) Filtering through statistical significance testing.}
Only concepts passing the test
$|\hat{e}_{c_i \to p}|/\hat{\sigma}_{c_i} > z_{\alpha/2}$
are added to the causal graph, so concepts with large $\delta_{\mathcal{M}}$
are automatically excluded.

\section{ICP Analysis on Omni-MATH}
\label{app:icp_omnimath}

This appendix provides additional analyses of ICP on Omni-MATH beyond the ATE distribution and simulator-wise breakdown reported in the main text (Tables~\ref{tab:omni_ate}--\ref{tab:sim_ate}). In the experiment logs, each simulator type records its own ATE, which corresponds to the ICP of Definition~\ref{def:icp} applied at the simulator level: ATE = (correctness rate with simulator intervention) $-$ (SRV baseline rate). The max ATE across all simulators for a given problem represents the best-case ICP achievable for that problem.

\paragraph{Selectivity.}
The fact that 71.5\% of problems have ATE $= 0$ demonstrates that ICP does not uniformly boost performance. The intervention is effective only when the LLM possesses but has not activated the relevant knowledge (ICP state (i) in Definition~\ref{def:icp}). For problems where the LLM fundamentally lacks the required knowledge ($\delta_{\mathcal{M}}$ is large), intervention has no effect, and the framework correctly identifies this through ATE $= 0$.

\paragraph{ATE of solved vs.\ unsolved problems.}
Among the 601 problems, 108 were solved by the framework (excluding SRV-only solutions) and 493 remained unsolved. Solved problems have a mean max ATE of \textbf{0.338}, while unsolved problems have only \textbf{0.055}---a 6.1$\times$ difference. This confirms that ATE (ICP) is not merely a diagnostic quantity but is predictive of actual problem-solving success.

\paragraph{Bayesian posterior adaptation.}
The Stage~1 Bayesian posterior ranking of simulators shows that sympy (217 problems) and llm\_concept (208 problems) are most frequently ranked as the top simulator after initial exploration. However, llm\_concept produces far more actual solutions (50 vs.\ 2 for sympy), indicating that the Bayesian posterior correctly identifies promising directions but sympy's symbolic computation frequently fails at execution despite high prior expectations. This suggests room for improvement in the Bayesian prior calibration.

\section{Details of ProofEngine}
\label{app:proof_engine}

\subsection{Design Principles and Implementation}
\label{subsec:proof_extension}

Omni-MATH includes problems that require \textbf{mathematical proofs} rather than numerical or symbolic answers. We implemented \textbf{ProofEngine}, which extends CIKA to proof problems.

\paragraph{Design principles.}
The SVAR structure from numerical problems is transplanted to proof problems:
\begin{equation}
\text{Numerical: } c_i \xrightarrow{\hat{e}} p_t
\quad\Longrightarrow\quad
\text{Proof: } h_i \xrightarrow{\hat{e}} P
\end{equation}
Concepts correspond to lemmas $h_i$, and correctness probability $p_t$ corresponds to a proposition validity score $\text{score}_t \in [0,1]$. The intervention $\doX(\text{mastery}(h_i)=1)$ is realized through a prompt ``prove using theorem $h_i$'' (corresponding to Lean4's \texttt{have} tactic):

\begin{equation}
\texttt{have}\ h_i : L_i := \text{proof}_{h_i}
\quad\Longrightarrow\quad
\texttt{exact}\ f(h_1, h_2, \ldots)
\end{equation}

\paragraph{ProofEngine algorithm.}
\begin{enumerate}
\item \textbf{Phase 0}: Direct proof attempt via SRV (terminate immediately if score $\geq 0.9$)
\item \textbf{Phase 1}: Search for related theorems from Mathlib (BM25, top-8 results)
\item \textbf{Phase 2}: Estimate the causal effect $\hat{e}(h_i \to P)$ of each lemma via SVAR causal intervention $\doX(\text{mastery}(h_i)=1)$
\item \textbf{Phase 3}: Expand \texttt{have} clauses in order of decreasing $\hat{e}$ (MCTS proof search)
\item \textbf{Phase 4}: Generate an integrated proof from established lemmas (\texttt{exact} $f(h_1,h_2,\ldots)$)
\item \textbf{Phase 5}: Multi-perspective recovery using 8 proof strategies (direct proof, contradiction, induction, contrapositive, constructive proof, pigeonhole principle, extremal principle, invariant)
\end{enumerate}

\paragraph{Automatic detection.}
When the problem text contains keywords such as ``prove'' or ``show that'' and the answer field is descriptive, the problem is automatically routed to ProofEngine (coexisting with numerical problems).

\paragraph{Evaluation results.}
Evaluation results on the descriptive subset of Omni-MATH (1,594 problems) are as follows. SRV (base model direct inference): 421 problems (26.4\%), CKA (Stage~1 framework-specific correct answers): 379 problems (23.8\%), Stage~2 (LLM-as-a-Judge additional correct answers): 58 problems (3.6\%), for a \textbf{total of 858 problems (53.8\%)}. The CKA-specific contribution (23.8\%) on descriptive problems is comparable to SRV (26.4\%), demonstrating that the framework is effective for proof and descriptive problems as well. Stage~2 LLM-as-a-Judge evaluation accurately judged LaTeX notation differences (e.g., $\backslash$text\{Yes\} $\equiv$ Yes, $\frac{1}{2}$ $\equiv$ 0.5), uncovering 58 additional correct answers.

\paragraph{Analysis of $\delta_{\mathcal{M}}$.}
Of the 514 CKA-correct problems, 28.8\% (148 problems) are exact string matches with the ground truth, while 71.2\% are mathematically equivalent but differ in notation. This yields an estimated LLM simulator error of $\delta_{\mathcal{M}} \leq 0.712$. Accuracy by difficulty level on the descriptive subset is stable at 52--62\%, indicating that the framework functions robustly regardless of difficulty.

\section{Detailed Data Contamination Analysis}
\label{app:contamination}

\subsection{Details of Contamination Analysis}

\paragraph{The issue.}
Wu et al.~\cite{wu2025contamination} experimentally demonstrated that Qwen2.5-Math-7B can reconstruct 54.6\% of MATH-500 problem texts, suggesting that MATH-500 problems may have been included in the training data and raising concerns that part of the high accuracy score may be contamination-derived.

\paragraph{Impact analysis on this work.}
Table~\ref{tab:contamination} classifies the correct answers of our framework by their degree of dependence on contamination.

\begin{table}[h]
\centering
\small
\caption{
  Classification of MATH-500 correct answers by contamination dependence (491 of 500 problems completed).
  SRV: base model standalone correct; CKA: correct via framework intervention.
}
\label{tab:contamination}
\begin{tabular}{lrr}
\toprule
\textbf{Classification} & \textbf{\#Problems} & \textbf{Ratio} \\
\midrule
SRV correct (model alone) & 393 & 80.0\% \\
\quad Possibly contaminated (est.)  & $\leq$214 & $\leq$43.6\% \\
\quad Not contaminated (est.)          & $\geq$179 & $\geq$36.5\% \\
\midrule
\textbf{CKA correct (framework-specific)} & \textbf{72} & \textbf{14.7\%} \\
\quad Contamination-independent (correct on SRV-failed) & 72 & 14.7\% \\
\midrule
Failed & 26 & 5.3\% \\
\bottomrule
\end{tabular}
\end{table}

\paragraph{Contamination-independent contribution.}
Crucially, the \textbf{framework-specific contribution} consists of the 72 CKA-correct problems (14.7\%). These are problems that SRV (base model alone) could not solve, meaning they should not be solvable even if the model had memorized the answers during training. That is, the 72 correct answers are attributable to the \textbf{framework's causal reasoning process}, not to the base model's memorization.

\paragraph{Verification via clean benchmarks.}
To address MATH-500 contamination concerns, we conduct evaluation on the rule-based evaluation subset of Omni-MATH~\cite{gao2024omnimath} (Olympiad-level, 4,428 problems), which was created after the training data cutoff of Qwen2.5-Math. Omni-MATH's own contamination analysis confirms that the number of contaminated problems is extremely small~\cite{gao2024omnimath}. Results are shown in Table~\ref{tab:omnimath_app}.

\begin{table}[h]
\centering
\small
\caption{Evaluation results on Omni-MATH (clean benchmark, appendix). Causal: $\bigcirc$ = do-calculus intervention used.}
\label{tab:omnimath_app}
\begin{tabular}{lcc}
\toprule
\textbf{Method} & \textbf{Omni-MATH-Rule / All} & \textbf{Causal} \\
\midrule
Qwen2.5-Math-7B SRV  & $\sim$26.0 / -- & $\times$ \\
CIKA (proposed) & \textbf{69.7} / \textbf{64.0} & $\bigcirc$ \\
\midrule
o1-mini$^\ddagger$   & -- / 60.5 & $\times$ \\
DeepSeek-R1$^\ddagger$ & -- / $\sim$55.0 & $\times$ \\
\bottomrule
\end{tabular}
\end{table}

\paragraph{Interpretive caveats for MATH-500 scores.}
While this paper reports 99.2\% on MATH-500 as an experimental result, we make the following explicit in light of the contamination analysis above: (i)~the framework-specific contribution of 72 contamination-independent problems (+14.7pt) is certain; (ii)~the majority of the gap from the SRV baseline (78.6\%) is attributable to the framework's causal reasoning; (iii)~clean evaluation is separately verified on Omni-MATH.

\section{Proofs}
\label{app:proofs}

Below we provide detailed proofs of all theorems, propositions, and lemmas in the main text.

\subsection{Proof of Proposition~\ref{prop:confound_bias} (Confounding Bias)}

The regression coefficient $\hat{\beta}_{c_j}$ from observational data satisfies $\hat{\beta}_{c_j} = e^*_{c_j \to p} + \text{Bias}$. Since $D$ is a common cause of $m_{c_j}$ and $p$, $\text{Bias} = \frac{\text{Cov}(m_{c_j}, D)}{\text{Var}(m_{c_j})} \cdot \frac{\partial \E[p]}{\partial D} \neq 0$ (in general). Since harder problems ($D$ large) make concept mastery more difficult ($m_{c_j}$ small) and reduce correctness probability ($p$ small), $\text{Cov}(m_{c_j}, p)$ can overestimate the true causal effect $e^*_{c_j \to p}$.\qed

\subsection{Proof of Lemma~\ref{lem:concept_ident} (Identification of Concept Causal Effects)}

By Rule~2 of Pearl's~\cite{pearl2009causality} do-calculus (the separation theorem for interventions and observations), if $c_i \not\to p$ then $P(p \mid \doX(c_i=1)) = P(p)$. Therefore $\hat{e}_{c_i \to p} \neq 0$ indicates the existence of $c_i \to p$. The convergence rate follows from Hoeffding's inequality~\cite{hoeffding1963probability}: $P(|\hat{e}_{c_i \to p} - e^*_{c_i \to p}| > \epsilon) \leq 2\exp(-2M\epsilon^2)$. Hence $O(M^{-1/2})$ convergence is obtained, consistent with Proposition~3.1 of prior structural VAR-based causal discovery methods.\qed

\subsection{Proof of Theorem~\ref{thm:identifiability} (Identifiability of CIKA)}

We apply Theorem~3.2 of prior structural VAR-based causal discovery methods to the LLM reasoning setting. Applying Lemma~\ref{lem:concept_ident} to all concept pairs $(c_i, p)$ identifies the contemporaneous causal structure $(B_0)_{p,c_i}$. Lagged causality is identified from the distributional change of $p_{t+\ell}$ after $\doX(m_{c_i,t}=1)$. For the identification error, we decompose $|\hat{e} - e^*| \leq |\hat{e} - e^{\mathcal{S}}| + |e^{\mathcal{S}} - e^*|$. The first term is the finite-sample error $O(M^{-1/2})$ (Lemma~\ref{lem:concept_ident}), and the second term is the LLM simulator bias, which is $O(\delta_{\mathcal{M}})$ by the definition of total variation distance (Assumption~\ref{ass:llm_simulator}).\qed

\subsection{Proof of Theorem~\ref{thm:confound_separation} (Separation of Latent Difficulty Confounding)}

By Pearl's~\cite{pearl2009causality} do-calculus, the $\doX(c_i=1)$ operation severs all causal inputs to $c_i$ (including the arrow from $D$). In the post-intervention distribution, the $D \to m_{c_i}$ pathway is severed, and $D$ does not form a backdoor path between $\doX(c_i)$ and $p$ (backdoor criterion~\cite{pearl2009causality}, Theorem~3.3.2). By the confounder separation theorem of Eberhardt and Scheines~\cite{eberhardt2007interventions}, the result stated in Theorem~\ref{thm:confound_separation} holds.\qed

\section{Theoretical Analysis}
\label{app:theory}

\subsection{Sample Complexity}

\begin{proposition}[Intervention Sample Complexity]
\label{prop:sample_complexity}
The total number of intervention samples required to identify the causal structure of $n$ concepts with error probability at most $\delta$ and precision $\epsilon$ is:
\begin{equation}
M_{\text{total}} = O\!\left(\frac{n\sigma^2\log(n/\delta)}{\epsilon^2}\right)
\end{equation}
\end{proposition}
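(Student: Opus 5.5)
The plan is to reduce the structure-identification problem to $n$ separate mean-estimation problems, one per concept, and then combine them with a union bound. Identifying $\hat{\mathcal{G}}$ amounts to deciding, for each $c_i$, whether the edge $c_i \to p$ is present. By Lemma~\ref{lem:concept_ident} this is equivalent to deciding whether $e_{c_i \to p} = \E[p \mid \doX(c_i=1)] - \E[p \mid \doX(c_i=0)]$ is nonzero. It therefore suffices to estimate every $e_{c_i \to p}$ to precision $\epsilon$ simultaneously with probability at least $1-\delta$. We take $\epsilon$ to be smaller than half the minimal nonzero effect size, so that thresholding at $\epsilon$ recovers the edge set. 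Following Theorem~\ref{thm:identifiability}, the analysis treats the finite-sample term only and regards the simulator bias $O(\delta_{\mathcal{M}})$ as absorbed into the target $e^{\mathcal{S}}$.

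For a single concept, I would allocate $M$ intervention samples to $\doX(c_i=1)$ and use the same number to estimate the baseline $\bar{p}^{\text{obs}}$. The samples are i.i.d. draws from the simulator $\mathcal{S}_{\text{LLM}}$, and I assume each outcome has variance at most $\sigma^2$ (bounded outcomes, $\sigma^2 \le 1/4$, are a special case). Rather than the Hoeffding bound used in Lemma~\ref{lem:concept_ident}, I would use a sub-Gaussian (or Bernstein) tail bound so that $\sigma^2$ appears explicitly:
\begin{equation}
P\bigl(|\hat{e}_{c_i \to p} - e^{\mathcal{S}}_{c_i \to p}| > \epsilon\bigr) \le 2\exp\!\left(-\frac{c\, M \epsilon^2}{\sigma^2}\right).
\end{equation}
The difference of two independent means is again sub-Gaussian, with variance proxy $2\sigma^2/M$, and the factor of 2 goes into the constant $c$.

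Next I apply a union bound over the $n$ concepts. Requiring $2n\exp(-cM\epsilon^2/\sigma^2) \le \delta$ gives a per-concept budget of $M = O(\sigma^2 \log(n/\delta)/\epsilon^2)$. Summing over $n$ concepts gives $M_{\text{total}} = nM = O(n\sigma^2\log(n/\delta)/\epsilon^2)$. The shared baseline $\bar{p}^{\text{obs}}$ is estimated only once, so it contributes an additional $O(\sigma^2\log(1/\delta)/\epsilon^2)$ samples, which is dominated by the main term.

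The main obstacle is justifying the independence and sub-Gaussian modelling of the simulator outputs, and making the reduction from structure identification to uniform $\epsilon$-accuracy rigorous. That reduction needs an explicit gap condition, namely $\min_{i : e_{c_i\to p}\neq 0} |e_{c_i\to p}| > 2\epsilon$, which the statement leaves implicit; I would add it as a hypothesis. If the lagged structure $B_\ell$ is also to be recovered, I would repeat the argument for each lag. This multiplies $n$ by $L$ inside the logarithm and outside it, which can be absorbed into the constants when $L$ is fixed.
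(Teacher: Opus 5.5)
Your proposal is correct and follows essentially the same route as the paper, which simply invokes the per-concept concentration rate from Lemma~\ref{lem:concept_ident} (via Hoeffding) and takes a union bound over the $n$ concepts; your explicit gap condition $\min_{i : e_{c_i\to p}\neq 0} |e_{c_i\to p}| > 2\epsilon$ and your treatment of the shared baseline are refinements that the paper leaves implicit. One small tightening: the paper's Hoeffding step gives no $\sigma^2$ dependence beyond the constant $1/4$ for $\{0,1\}$ outcomes, and a variance bound alone does not give a sub-Gaussian tail with proxy $\sigma^2$, so you should either define $\sigma^2$ as the sub-Gaussian variance proxy or use Bernstein's inequality, which yields the stated rate once $\epsilon = O(\sigma^2)$.
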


\begin{proof}
Apply the convergence rate from Lemma~\ref{lem:concept_ident} (via Hoeffding~\cite{hoeffding1963probability}) and a union bound~\cite{wasserman2006all} over $n$ concepts. The proof structure follows Proposition~3.3 of prior structural VAR-based causal discovery methods.
\end{proof}

\paragraph{Concrete numerical example.}
For $K=50$ concepts, $\epsilon=0.1$, $\delta=0.05$, Hoeffding's inequality gives the required number of interventions:
\begin{equation}
N \geq \frac{2}{\epsilon^2}\ln\frac{2K}{\delta}
= \frac{2}{0.01}\ln\frac{100}{0.05} \approx 1{,}520 \text{ trials}
\end{equation}
Meanwhile, the actual budget of our framework is approximately 60 trials ($\text{mcts\_budget}=20 \times 3$ phases), corresponding to 4\% of the theoretical upper bound. This demonstrates that Bayesian updating (adaptive switching from Stage~1 to Stage~2) and multi-armed bandit optimization achieve substantial efficiency gains.

Furthermore, by the identification theorem of Eberhardt et al.~\cite{eberhardt2005number}, at most $\lceil\log_2 K\rceil + 1 = 7$ experiments are needed to fully identify the causal structure of $K=50$ concepts, and our framework's 60-trial budget has ample margin for identification.

\subsection{Required Number of Interventions}

\begin{corollary}[Lower and Upper Bounds on Intervention Count (applying known results~\cite{eberhardt2007interventions,eberhardt2005number})]
\label{cor:intervention_bounds}
Applying the intervention count theorem of Eberhardt et al.~\cite{eberhardt2007interventions,eberhardt2005number} via prior structural VAR-based causal discovery methods to mathematical reasoning, the number of interventions $I^*$ required to identify the causal structure of $n$ concepts is:
\begin{equation}
n - 1 \leq I^* \leq n
\end{equation}
The lower bound $I^* = n-1$ is achieved by the chain structure $c_1 \to c_2 \to \cdots \to c_n \to p$.
\end{corollary}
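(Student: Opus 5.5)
The plan is to reduce the statement to the single-intervention bounds of Eberhardt et al.~\cite{eberhardt2005number,eberhardt2007interventions}, applied to the concept-state DAG over the $n+1$ observed variables $(m_{c_1},\ldots,m_{c_n},p)$. Two pieces of structure specific to our setting are used throughout. First, $p$ is a sink: correctness has no causal effect on mastery, so every edge incident to $p$ is already oriented $c_i\to p$. Second, by Theorem~\ref{thm:confound_separation}, each prompt intervention $\doX(c_i=1)$ removes the latent difficulty $D$ from the backdoor paths. Hence each experiment behaves like a clean single-variable intervention in a causally sufficient system, up to the $O(M^{-1/2})+O(\delta_{\mathcal{M}})$ error of Theorem~\ref{thm:identifiability}, which we absorb by assuming $M$ is large enough that every significance test in Phase~2 of Algorithm~\ref{alg:cika} is correct. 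I read $I^*$ as the worst case over DAGs on $n$ concepts of the minimum number of single-concept interventions that identify the structure. With that reading, the proof splits into an upper bound, a lower bound, and a tightness example.

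\textbf{Upper bound $I^*\le n$.} Intervene once on each concept $c_1,\ldots,c_n$. By Lemma~\ref{lem:concept_ident}, $\doX(c_i=1)$ identifies whether $c_i$ affects each of the other variables $c_j$ and $p$. This is the descendant set of $c_i$, and it orients every edge out of $c_i$. Combined with the skeleton recovered from observational (SRV) data under the Causal Markov Condition, every edge is then oriented. Edges among concepts are oriented because both endpoints were intervened on. Edges into $p$ are oriented because $p$ is a sink. Only descendant relations are directly observed, so direct edges must be separated from transitive ones. For this I would use the skeleton: an adjacency in the skeleton together with a descendant relation gives a directed edge, which is the standard argument in Eberhardt's $N-1$ result.

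\textbf{Lower bound $I^*\ge n-1$.} I would use an adversary argument. Suppose two concepts $c_i,c_j$ are both never intervened on. Let the adversary choose a DAG in which $c_i$ and $c_j$ are adjacent and every other concept is a common parent of both. Then the two graphs that differ only in the orientation of $c_i$--$c_j$ induce the same distribution under every experiment performed, so they cannot be distinguished. It follows that at most one concept can be left unintervened, giving at least $n-1$ interventions. This worst-case step is the one I expect to be the main obstacle. Checking that the two orientations really are indistinguishable requires showing that, in every experiment used, they are interventionally Markov equivalent: same skeleton and same v-structures in each mutilated graph. Because latent $D$ enters our setting, I must also make sure that $D$ does not accidentally break this equivalence; I would first check this using the complete-graph construction with $D$ attached to all concepts.

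\textbf{Tightness via the chain.} For the chain $c_1\to c_2\to\cdots\to c_n\to p$, intervening on $c_1,\ldots,c_{n-1}$ orients every edge $c_k\to c_{k+1}$ for $k\le n-1$, and the edge $c_n\to p$ is oriented by the sink constraint. So $n-1$ interventions suffice for this graph, and the lower bound is attained there.
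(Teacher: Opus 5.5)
The paper gives no argument for this corollary beyond citing Eberhardt et al., so your reconstruction has to stand on its own. Its upper bound does not go through in the paper's setting.

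The failing step is the claim that Theorem~\ref{thm:confound_separation} turns each experiment into one on a causally sufficient system. $\doX(c_i=1)$ deletes only the edge $D\to m_{c_i}$; $D$ remains a common parent of every other $m_{c_j}$ and of $p$. The paper assumes the Markov condition only for the DAG that includes $D$. Since $D$ is a parent of every observed variable, any two observed variables are d-connected through $X\leftarrow D\to Y$ given any observed conditioning set. Under faithfulness the SRV data therefore show no conditional independences, and the ``skeleton'' you recover is complete. The same happens inside $\doX(c_i)$ data: conditioning on a mediator $c_k$ opens $c_i\to c_k\leftarrow D\to c_j$.

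So the step you rightly call essential, separating direct edges from transitive descendant relations, has nothing to work with. For example, $c_1\to c_2\to c_3$ and the same graph plus $c_1\to c_3$ have identical descendant sets under all $n$ interventions. You can repair this in one of two ways:
\begin{enumerate}
\item Add causal sufficiency as an explicit hypothesis. Your argument is then Eberhardt's and is fine.
\item Use the linearity of the SVAR. With unit-diagonal $B_0$ and acyclicity, $\doX(c_i)$ identifies the $c_i$-column of the total-effect matrix $B_0^{-1}$, unconfounded by $D$ because $c_i$ is randomized. The $p$-column is trivial because $p$ is a sink, so $n$ single interventions determine $B_0^{-1}$ and hence $B_0$. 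This needs the post-intervention mastery levels $m_{c_j}$, not just the correctness $p$ that Lemma~\ref{lem:concept_ident} and ICP record.
\end{enumerate}
Your worry about $D$ in the lower bound is unnecessary. In your construction, $D$ and all other concepts are common parents of $c_i$ and $c_j$. The kernel of $(m_{c_i},m_{c_j})$ given these parents therefore factorizes in either order, and both orientations generate identical distributions in every experiment. For an adaptive learner, let the adversary place each newly intervened concept before all not-yet-intervened ones.

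The chain step also does not do what you claim. Showing that $n-1$ interventions \emph{suffice} for the chain says nothing about the lower bound being attained there. That would require the chain to be unidentifiable with fewer interventions, and it is not. In your causally sufficient setting the skeleton is a path with no v-structures. A single intervention on $c_1$ orients $c_1\to c_2$, and Meek's rule~1 then orients each $c_k\to c_{k+1}$ in turn, since $c_k$ and $c_{k+2}$ are non-adjacent.

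Moreover, your own upper-bound argument works verbatim with one concept omitted. Every concept--concept edge still has an intervened endpoint, and every edge into $p$ is fixed by the sink. So under causal sufficiency $I^*=n-1$ exactly, the extremal graph is the complete DAG of your adversary argument, and the chain plays no special role.

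The statement's chain clause is true only in the weak sense ``$n-1$ suffice for the chain,'' which already follows from the general bound. Under latent $D$, even this inherits the skeleton problem above. As a tightness claim it is false, and you should say so rather than present it as proved.
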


\subsection{Regret Bound for Math-Causal-UCB}

\begin{theorem}[Regret Bound for Math-Causal-UCB]
\label{thm:regret}
In $T$ steps of MCTS search, the cumulative regret of Math-Causal-UCB is:
\begin{equation}
R_T \leq O\!\left(\sqrt{T \cdot |\mathcal{A}| \cdot \ln T}\right)
+ O\!\left(T \cdot \delta_{\mathcal{M}}\right)
\end{equation}
The first term is the exploration regret (the standard bound of UCB1~\cite{auer2002finite}), and the second term is the bias term due to LLM simulator error.

\paragraph{Empirical estimation of $\delta_{\mathcal{M}}$.}
In the Omni-MATH descriptive evaluation (1,817 problems), of the 514 CKA-correct problems, 148 (28.8\%) are exact string matches with the ground truth, while the remaining 71.2\% are mathematically equivalent but differ in notation (LaTeX formatting variations, equivalent transformations, etc.). This gives an estimated upper bound on the LLM simulator error of $\hat{\delta}_{\mathcal{M}} \leq 0.712$. Stage~2 (LLM-as-a-Judge) recovered an additional 71 problems (4.0\%), but an estimated 295 problems remain unrecovered, and developing a higher-precision equivalence judge is a direction for future work.
\end{theorem}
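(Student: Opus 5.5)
The plan is to split the regret into an exploration part and a bias part, mirroring the error decomposition in the proof of Theorem~\ref{thm:identifiability}. Let $\mu_a$ be the true expected reward of action $a$ at a node and $\mu^* = \max_a \mu_a$. Then $R_T = \sum_{a} \Delta_a \E[N_T(a)]$ with gaps $\Delta_a = \mu^* - \mu_a$. The causal bonus $\gamma \hat{e}_{a \to p}$ in Eq.~\eqref{eq:causal_ucb} is treated as a perturbation of the UCB1 index. I would bound $|\hat{e}_{a\to p} - e^*_{a\to p}| \le \epsilon_M + c\,\delta_{\mathcal{M}}$ with high probability. Here $\epsilon_M = O(M^{-1/2})$ comes from the Hoeffding bound in Lemma~\ref{lem:concept_ident}, and the $\delta_{\mathcal{M}}$ term comes from Assumption~\ref{ass:llm_simulator}: the TV distance bounds the difference of expectations of the $[0,1]$-valued verify indicator.

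Next I would decompose the index as $Q + \beta\sqrt{\ln N/N} + \gamma e^*_{a\to p} + \gamma\xi_a$, where $|\xi_a| \le \epsilon_M + c\,\delta_{\mathcal{M}}$. The term $\gamma e^*$ is treated as a fixed, reward-aligned prior shift. This is the key modelling step: I would assume, or argue from the definition of the reward via verify, that $e^*_{a\to p}$ is a shift of $\mu_a$ that preserves the ordering of actions. Then it changes the identity of the optimal arm only through the perturbation $\xi$. I would then run the standard Auer et al.\ UCB1 argument with a biased index. A suboptimal arm with $\Delta_a > 2\gamma|\xi|$ is pulled $O(\ln T/(\Delta_a - 2\gamma|\xi|)^2)$ times. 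Arms with $\Delta_a \le 2\gamma|\xi|$ contribute at most $2\gamma|\xi|$ regret per step, which gives $O(T\,\delta_{\mathcal{M}})$ plus an $O(T M^{-1/2})$ term. That term is absorbed by taking $M$ large enough or by folding it into the constant.

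To get the gap-free form $O(\sqrt{T|\mathcal{A}|\ln T})$, I would use the usual threshold trick. Choose $\Delta_0 = \sqrt{|\mathcal{A}|\ln T/T}$; arms below the threshold contribute $T\Delta_0$, and arms above it contribute $\sum_a \ln T/\Delta_a \le |\mathcal{A}|\ln T/\Delta_0$. Adding the bias contribution then yields the stated bound. For MCTS, as opposed to a flat bandit, I would apply this at the root, treating subtree values as stationary bounded rewards, in the spirit of UCT's drift conditions. A full tree analysis is a separate matter that I would not attempt.

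The main obstacle is justifying that the causal bonus does not itself induce linear regret. Because $\gamma e^*_{a\to p}$ never decays with $N(s,a)$, it permanently biases the index. Unless $e^*$ is aligned with $\mu$, a constant-bias argument only gives $O(T\gamma\max_a|e^*_a - (\mu_a - \mu^*)|)$, which is not controlled by $\delta_{\mathcal{M}}$. My first attempt would be to identify the reward at the root with $\bar p^{\text{obs}} + e^*_{a\to p}$, so that alignment holds by definition. If that fails, I would modify the bonus to $\gamma\hat{e}_{a\to p}/\sqrt{N(s,a)}$, so that only its residual bias survives and is bounded by the $\delta_{\mathcal{M}}$ term.
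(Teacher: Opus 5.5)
Your skeleton is the paper's. Its proof takes the UCB1 bound of Auer et al.\ for the first term. It then says the second term ``accumulates'' the estimation error of $\hat{e}_{a\to p}$ from Theorem~\ref{thm:identifiability} over $T$ steps, treating the search as a flat bandit just as you do at the root.

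The obstacle in your last paragraph is a genuine gap, and it is a gap in the paper's proof too, which tacitly assumes the exact bonus $\gamma e^*_{a\to p}$ costs no regret. Take Eq.~\eqref{eq:causal_ucb} as written (fixed $\gamma$, bonus not decaying in $N(s,a)$), the usual definition of $R_T$, and no stated link between $e^*_{a\to p}$ and the rewards averaged in $Q(s,a)$. Then the bound is false. Consider two root actions with mean rewards $\mu_1=0.6$ and $\mu_2=0.5$, effects $e^*_1=0$ and $e^*_2=0.4$, the paper's $\gamma=0.5$, $\delta_{\mathcal{M}}=0$, and $\hat{e}=e^*$ exactly. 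The index of action~2 tends to $0.7$ and that of action~1 to $0.6$. So action~1 is chosen only $O(\ln T)$ times and $R_T\approx 0.1\,T$, while the right-hand side is $O(\sqrt{T\ln T})$. In particular, the paper's closing claim that $\delta_{\mathcal{M}}\to 0$ recovers UCB1 fails.

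Your proof can therefore be completed only by adding a hypothesis, and you should state it as one rather than as true ``by definition''. Nothing in the paper identifies the multi-step rollout value behind $Q(s,a)$ with the single-shot rate $\bar{p}^{\text{obs}}+e_{a\to p}$. If you make that identification with both quantities generated by the same LLM, the effect aligned with $Q$ is the simulator's own $e^{\mathcal{S}}_{a\to p}$. Then your $\xi$ is pure sampling error, and $\delta_{\mathcal{M}}$ appears only if $R_T$ is measured against $P^*$, where it biases $Q$ as well. You need to fix which reward $R_T$ refers to.

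If $e^*_{a\to p}$ is a nondecreasing function of $\mu_a$, your argument goes through, because the exact bonus can only widen gaps. In the affine case $\mu_a=\bar{p}^{\text{obs}}+e^*_{a\to p}$, the limiting index is $(1+\gamma)\mu_a+\text{const}+\gamma\xi_a$. The effective gap is then $(1+\gamma)\Delta_a-2\gamma\max_b|\xi_b|$, and your threshold trick gives $O(\sqrt{T|\mathcal{A}|\ln T})+O(T\gamma\max_b|\xi_b|)$. The only other rescue is to measure regret against $\arg\max_a(\mu_a+\gamma e^*_{a\to p})$, but that is no longer the UCB1 regret the first term refers to. Your fallback bonus $\gamma\hat{e}_{a\to p}/\sqrt{N(s,a)}$ changes the algorithm, so it proves a different theorem.

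Finally, the $O(TM^{-1/2})$ term cannot be folded ``into the constant''. It is linear in $T$ and is absorbed by $O(T\delta_{\mathcal{M}})$ only if $M\gtrsim\delta_{\mathcal{M}}^{-2}$. More precisely, you need $M\gtrsim\ln(|\mathcal{A}|T)/\delta_{\mathcal{M}}^{2}$. Then with probability $1-O(1/T)$ every sampling error is at most $\delta_{\mathcal{M}}$, and the Hoeffding failure event adds only $O(1)$ to the expected regret. The paper drops this term without comment, although Theorem~\ref{thm:identifiability}, which its proof invokes, carries it.
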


\begin{proof}
The first term applies the UCB1 regret bound of Auer et al.~\cite{auer2002finite} (a known result). The second term accumulates the effect of estimation error in $\hat{e}_{a \to p}$ (Theorem~\ref{thm:identifiability}) on the UCB term over $T$ periods, extending the regret analysis of C-UCB~\cite{lattimore2016causal} to the setting with LLM simulator error $\delta_{\mathcal{M}}$. In the limit $\delta_{\mathcal{M}} \to 0$, this converges to the standard UCB1 regret bound.
\end{proof}

\end{document}